 \renewcommand*{\backrefalt}[4]{%
    \ifcase #1%
     \or Cited on page:~#2%
     \else Cited on pages:~#2%
    \fi%
    }
\title{Conformal Prediction Meets Long-tail Classification}
\author{
Shuqi Liu, Jianguo Huang, and Luke Ong\\
~\\
{\centering Nanyang Technological University}
}
\begin{document}
\date{}
\maketitle

\begin{abstract}%
Conformal Prediction (CP) is a popular method for uncertainty quantification that converts a pretrained model's point prediction into a prediction set, with the set size reflecting the model's confidence. Although existing CP methods are guaranteed to achieve marginal coverage, they often exhibit imbalanced coverage across classes under long-tail label distributions, tending to over cover the head classes at the expense of under covering the remaining tail classes. This under coverage is particularly concerning, as it undermines the reliability of the prediction sets for minority classes, even with coverage ensured on average. In this paper, we propose the Tail-Aware Conformal Prediction (TACP) method to mitigate the under coverage of the tail classes by utilizing the long-tail structure and narrowing the head-tail coverage gap. Theoretical analysis shows that it consistently achieves a smaller head-tail coverage gap than standard methods. To further improve coverage balance across all classes, we introduce an extension of TACP: soft TACP (sTACP) via a reweighting mechanism. The proposed framework can be combined with various non-conformity scores, and experiments on multiple long-tail benchmark datasets demonstrate the effectiveness of our methods. 

\end{abstract}

\section{Introduction}
Uncertainty quantification (UQ) is crucial in building reliable machine learning systems, especially in high-stakes applications, such as autonomous driving \cite{grigorescu2020survey}, where overconfident errors can be costly. 
A popular approach for UQ is Conformal Prediction (CP) \cite{vovk2005algorithmic}, which converts the point predictions of a pretrained model into prediction sets that come with rigorous statistical guarantees. 
Specifically, CP ensures that the prediction set contains the true label with a user-specified probability on average, and the size of the prediction set serves as a proxy for model confidence: smaller sets imply higher confidence.

Despite their strong theoretical guarantees, standard CP methods often struggle under long-tail (LT) label distributions \cite{buda2018systematic} leading to imbalanced coverage across classes.
A long-tail distribution refers to a class frequency distribution in which a small subset of classes accounts for the majority of samples, whereas the remaining classes contain comparatively few samples. The high-frequency classes are commonly referred to as head classes, while the low-frequency classes are termed tail classes.
CP methods typically achieve the desired marginal coverage on average, but display specific coverage imbalance: over covering the head classes at the cost of under covering the tail classes \cite{lofstrom2015bias, lu2022fair}. Conformal prediction sets often fail to include the ground-truth tail label with the target probability, resulting in unreliable uncertainty estimations. Such under coverage can mislead subsequent downstream decision-making, potentially causing costly or even harmful outcomes in fairness-sensitive or safety-critical scenarios.

To address this coverage imbalance observed in standard CP, group-conditional and class-conditional approaches have been proposed \cite{DBLP:conf/nips/DingABJT23,ShiGBD024,DBLP:journals/ml/Vovk13}, which aim to achieve coverage guarantee for each group or individual class.
However, these methods are often less practical when the calibration data is limited and exhibits a long-tail distribution, as the per-group or per-class acceptance thresholds estimates become highly unreliable and lead to excessively large prediction sets.
These limitations highlight the need for conformal prediction methods that provide valid, efficient, and balanced uncertainty estimates between groups with head and tail classes or further across classes in LT settings.

To tackle the problems above, we propose the \textbf{T}ail-\textbf{A}ware \textbf{C}onformal \textbf{P}rediction (TACP) method, which introduces a tailored regularization term that adapts to the underlying label imbalance, leading to more balanced coverage between head and tail classes. Then we provide a theoretical analysis demonstrating that TACP narrows the coverage gap between head and tail classes compared with standard CP.
To further improve coverage balance across all individual classes, we introduced an extension of TACP: \textbf{s}oft \textbf{T}ail-\textbf{A}ware \textbf{C}onformal \textbf{P}rediction (sTACP), which employs a smooth reweighting strategy for adaptive penalty control.
Our proposed frameworks are flexible and can be combined with a wide range of representative non-conformity scores.

Extensive experiments on multiple long-tail benchmarks, including CIFAR100-LT and ImageNet-LT, demonstrate the effectiveness of the proposed TACP and sTACP frameworks. TACP significantly reduces the head-tail coverage gap compared to standard CP across four non-conformity scores, while maintaining informative prediction sets. For example, on ImageNet-LT ($\rho = 0.6$) using the APS score, TACP reduces CovGap-HT from $2.18$ to $1.11$, accompanied by a decrease in AvgSize (36.43 $\rightarrow$ 33.98). sTACP further improves class-conditional coverage balance, outperforming other baselines on ImageNet-LT. For instance, with APS as the base score at $\alpha = 10\%$, sTACP reduces the class-conditional gap from $19.00\%$ (STANDARD) to $15.86\%$ while preserving a similar set size. These results highlight that TACP and sTACP provide robust, efficient, and balance uncertainty estimates under long-tail settings.

We summarize our contributions as follows:
\begin{itemize}
   \item We study the problem of CP under LT label distributions and propose the TACP framework to mitigate the coverage gap between head and tail classes.
   \item We provide a theoretical analysis to demonstrate the effectiveness of TACP and empirically compare its performance with several baselines across multiple long-tail benchmarks using four non-conformity scores.
   \item To narrow class-conditional coverage gaps in LT scenarios, we propose sTACP, an enhanced variant of TACP. Extensive experiments demonstrate that sTACP consistently reduces the class-conditional coverage gap across a wide range of long-tail classification tasks.
\end{itemize}

\section{Related Work}
Conformal prediction is an active area of research that provides a model-free and distribution-free framework for UQ by converting the point prediction of any model into a prediction set that contains the ground-truth label with a user-defined level. A widely used variant of CP is split CP \cite{lei2018distribution, papadopoulos2002inductive}, which employs a held-out calibration set to improve the computational efficiency of full CP \cite{vovk2005algorithmic}. In addition, several variants of CP have been developed based on cross-validation \cite{vovk2015cross} or leave-one-out \cite{barber2021predictive,lee2025leaveoneout}. 
There is an increasing amount of research focused on maintaining marginal coverage by relaxing the exchangeability assumption to account for
covariate shift \cite{tibshirani2019conformal} and label shift \cite{podkopaev2021distribution,xu2025wasserstein}.

CP has been applied to various tasks, including classification \citep{RAPS2021,lei2013distribution,DBLP:conf/nips/RomanoSC20,zhou2024conformal}, regression \citep{romano2019conformalized,seedat2023improving,sesia2021conformal}, and outlier detection \cite{bates2023testing,guan2022prediction}. 
Its performance is typically assessed by coverage validity and length efficiency. Hence, a central research goal is to improve length efficiency
\citep{DBLP:conf/icml/HuangXZYQW24, liu2024c,xi2025does}.
Another important line of research seeks to move beyond marginal coverage guarantees, including methods that aim to improve class-conditional coverage \citep{plassier2024probabilistic,sadinle2019least,shi2013applications}, feature-conditional coverage \citep{DBLP:conf/nips/RomanoSC20}.
Such conditional guarantees are especially important in LT data settings, where marginal coverage can obscure significant disparities across different classes or subgroups \citep{Kevin2023,lofstrom2015bias}.

Label distributions in real-world applications are often long-tail \cite{buda2018systematic}, where a few head classes dominate the data while many tail classes have a limited number of samples. Under such LT settings, previous works focused on improving models' prediction accuracy by 
data re-sampling \cite{brodersen2010balanced,chawla2002smote}
and surrogate loss function modification \cite{cao2019learning,tan2020equalization}.

Previous work has observed the phenomenon of imbalanced coverage when applying conformal prediction to long-tail or imbalanced label distributions  \cite{lofstrom2015bias,lu2022fair}.
\citet{Kevin2023} empirically showed that performance degrades in LT regimes and highlighted the coverage imbalance across classes.
Moreover, \citet{ShiGBD024} considered a partial LT setting, where the training dataset is highly imbalanced, while the calibration and test datasets used for CP remain balanced.
In contrast, we consider a fully long-tail setting, where the training, calibration, and test datasets all follow long-tail label distributions. This setting poses challenges for achieving reliable coverage guarantees across classes, and CP methods tailored for this scenario remain relatively undeveloped.

\begin{figure}
    \centering
    \begin{subfigure}[t]{0.48\textwidth}
        \centering
        \includegraphics[width=\linewidth]{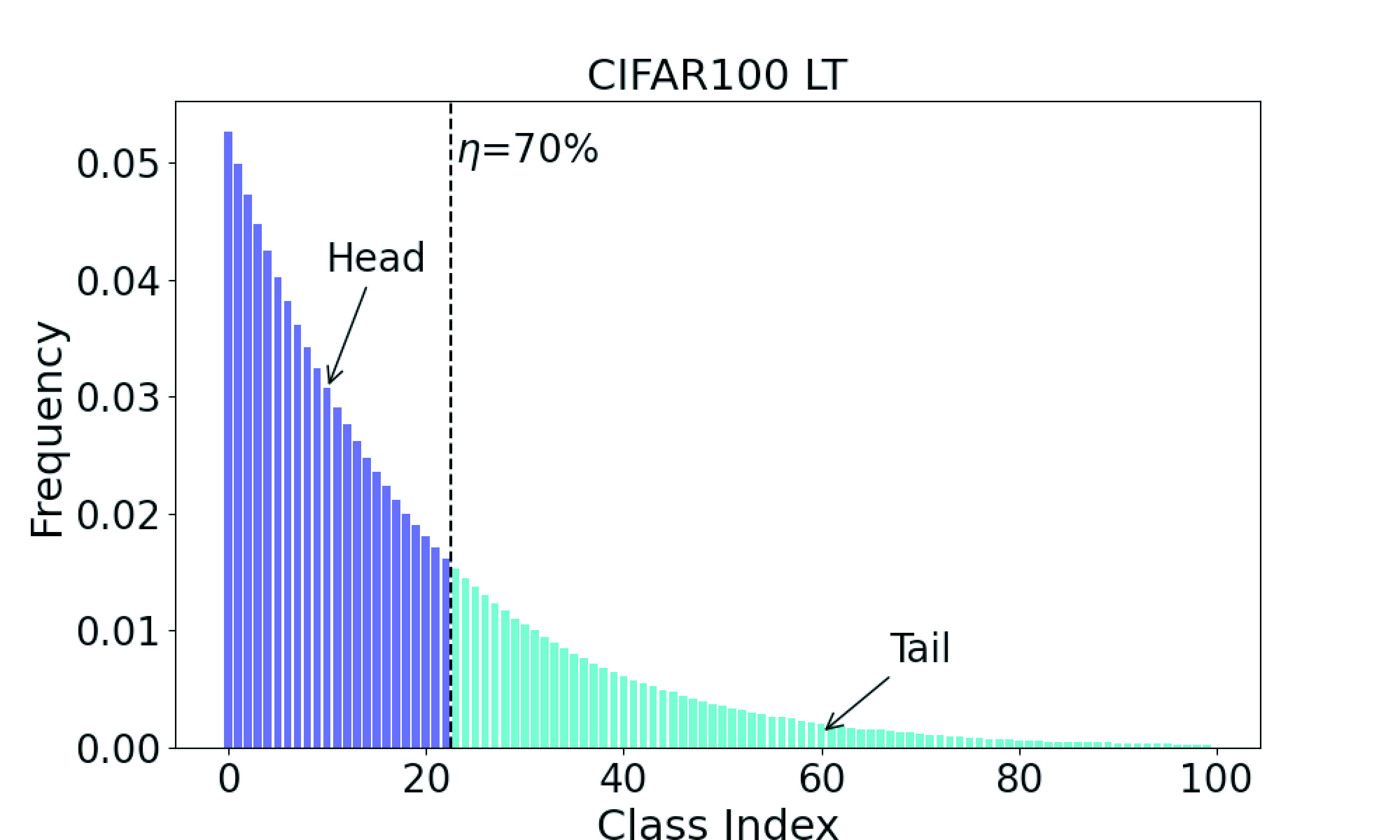}
        \caption{}\label{fig:distribution_lt}
    \end{subfigure}%
    \begin{subfigure}[t]{0.48\textwidth}
        \centering
        \includegraphics[width=\linewidth]{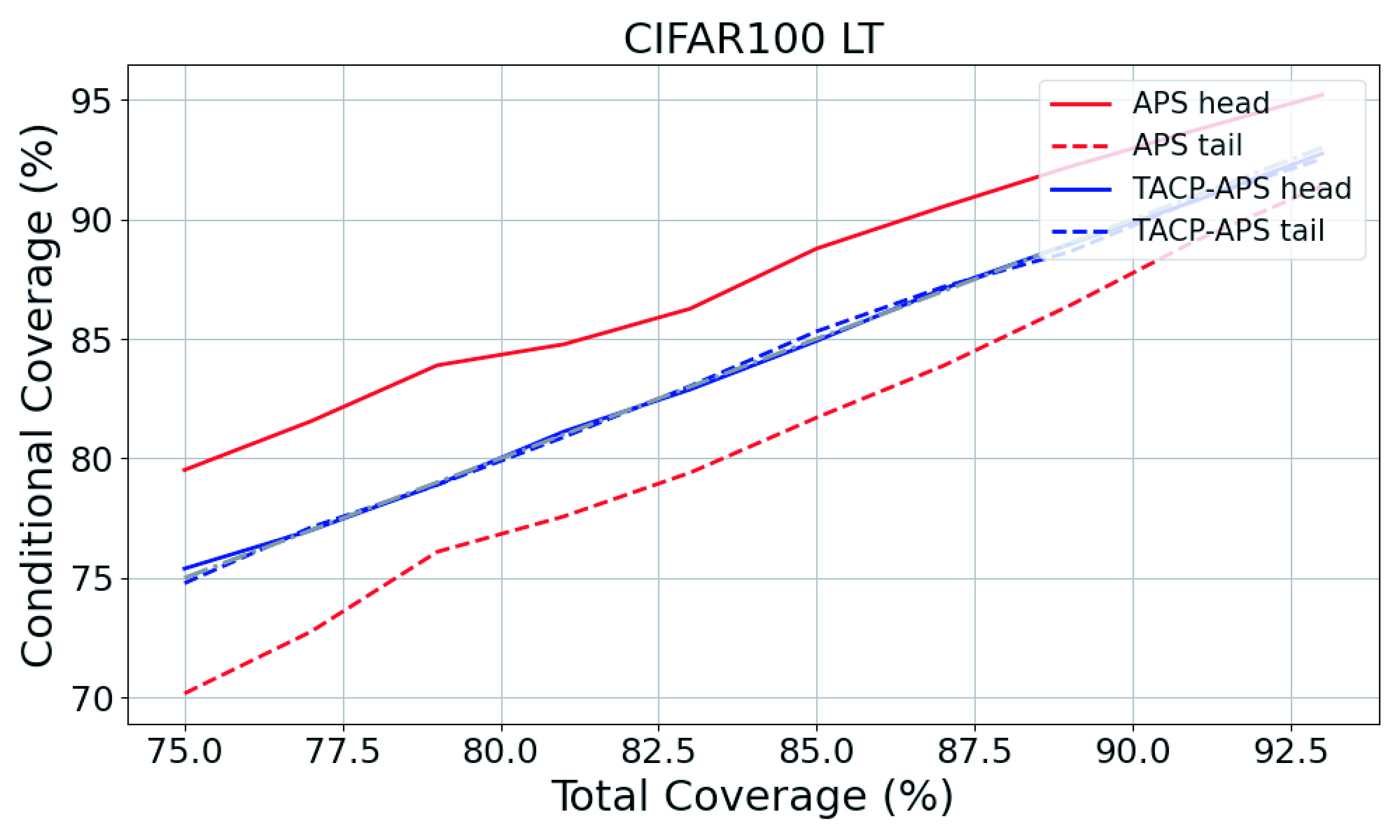}
        \caption{}
        \label{fig:head_tail_coverage}
    \end{subfigure}%
    \caption{Left: Head-tail partitioning at $\eta=70\%$. Right: Head- and tail-conditional coverage as a function of total coverage for the STANDARD and TACP methods using APS non-conformity score. Both figures are based on CIFAR100 LT with an imbalance factor $\mu=100$, which quantifies the ratio between the most and least frequent classes.}
    \label{illustrate}
\end{figure}

\section{Preliminaries}
\subsection{Conformal Prediction}
In conformal prediction, let the feature space be $\mathcal{X}\subseteq \mathbb{R}^d$ and the label space be $\mathcal{Y}=\{1,2,\dots,K\}$ with $K$ classes. We focus on the feature-label random variable $X\times Y\in \mathcal{X}\times \mathcal{Y}$, which has a joint distribution with density $p(\bm{x}, y)$ where $\bm{x},y$ are their realizations. The calibration samples $\{(X_i,Y_i)\}_{i=1}^n$ and the test point $(X_{n+1},Y_{n+1})$ are independently and identically drawn (i.i.d.) from the unknown distribution $p(\bm{x}, y)$ (the i.i.d. assumption can be relaxed to an exchangeability assumption \cite{vovk2005algorithmic}). Ordinary CP in classification tasks aims at constructing a set predictor $\mathcal{C}:\mathcal{X}\rightarrow 2^{\mathcal{Y}}$ such that for a new test point $(X_{n+1},Y_{n+1})$, the following coverage guarantee holds:
\begin{align}
\label{marginal_cov_eq}
    \mathbb{P}(Y_{n+1} \in \mathcal{C}(X_{n+1}))\geq 1-\alpha,
\end{align}
where $\alpha\in(0,1)$ is a predefined miscoverage level. 
The typical approach for generating such a prediction set is to design a non-conformity score function $s:\mathcal{X}\times\mathcal{Y}\rightarrow \mathbb{R}$ that quantifies the uncertainty of the pretrained model 
$\hat{\pi}:\mathcal{X}\rightarrow \Delta^{K-1}$ and then return a label set whose scores fall below some acceptance threshold. 
For example, split conformal prediction takes the acceptance threshold as the $1-\alpha$ quantile of the calibration non-conformity scores:
\begin{align*}
    \mathcal{C}(X_{n+1})=\{y: s(X_{n+1},y)\leq \hat{\tau}_{\alpha}\},
\end{align*}
where
    $\hat{\tau}_{\alpha} = \text{Quantile}\left(1-\alpha,\{s(X_i,Y_i)\}_{i=1}^n \right)$.
We can make the prediction set more conservative by defining  $\hat{\tau}_{\alpha}$ as the $\lceil (n+1)(1-\alpha)\rceil/n$ quantile and refer to this baseline as STANDARD. CP is a useful approach that provides a marginal coverage guarantee, i.e., Eq.~(\ref{marginal_cov_eq}), without requiring any assumptions on the model or data distribution.

\subsection{Class-Conditional Coverage}
However, marginal coverage only ensures that predictions are valid on average over the entire population, which often masks imbalanced errors across subgroups \citep{DBLP:conf/nips/DingABJT23,lofstrom2015bias}.
To address this, a stronger notion is class-conditional coverage, which imposes the coverage guarantee for each class $y \in \mathcal{Y}$ individually:
\begin{align}
\label{classwise_cov}
     \mathbb{P}(Y_{n+1} \in \mathcal{C}(X_{n+1})\mid Y_{n+1}=y)\geq 1-\alpha.
\end{align}
A direct approach to achieve Eq.~(\ref{classwise_cov}) is to split the calibration dataset by class and applying conformal prediction separately to each class \cite{DBLP:journals/ml/Vovk13}, denoted by CLASSWISE.

However, the per-class acceptance threshold estimators in CLASSWISE can be highly noisy for classes with limited samples, often causing overly large prediction sets~\cite{shi2013applications} (See Exp.3 in Section 7).

To mitigate the inefficiency of prediction sets, \citet{DBLP:conf/nips/DingABJT23} introduced CLUSTER, which groups “similar” classes by clustering based on quantiles of their non-conformity score distributions. Calibration samples are then assigned to these clusters, and standard CP is applied within each cluster.
Moreover, \citet{ShiGBD024} provided the Rank Calibrated Class-conditional CP (RC3P) method, which aims to achieve class-conditional coverage while improving prediction efficiency by calibrating the acceptance thresholds for each class using a label-rank calibration procedure.

Although CLUSTER and RC3P methods significantly improve efficiency while approximating class-conditional coverage, both methods require splitting the calibration set by clusters or classes and estimating acceptance thresholds within each subset. 
In the LT setting, the scarcity of tail-class samples leads to highly unreliable threshold estimates, thereby limiting the practicality of such methods.  This challenge is even more pronounced in the fully long-tail setting we consider, where the training, calibration, and test sets all follow long-tail label distributions.

\section{Methods}
\label{section_method}

\subsection{CP under LT Distributions}
\paragraph{Head-Tail Partition} In LT settings, a common approach is to partition the label space $\mathcal{Y}$ into \textit{head} classes $\mathcal{G}_h$ and \textit{tail} classes $\mathcal{G}_t$. 
Specifically, we define the head classes $\mathcal{G}_h$ as the smallest subset of labels whose cumulative class prior probability exceeds a pre-defined threshold $\eta \in(0,1)$:
\begin{align}
\label{headdef}
    \mathcal{G}_h = \mathop{\rm argmin}_{\mathcal{G}\subseteq \mathcal{Y}} |\mathcal{G}|,~~\text{s.t.} \sum_{y\in \mathcal{G}} p_{y}\geq \eta,
\end{align}
where $p_y$ denotes the class prior probability of class $y$.
The tail class group $\mathcal{G}_t$ is then defined as $\mathcal{Y}\setminus \mathcal{G}_h$. For illustration, Figure~\ref{fig:distribution_lt} shows the class distribution of a long-tail CIFAR100, where classes are sorted by empirical frequency, and the head group corresponds to the smallest subset whose cumulative frequency exceeds $\eta$.

\paragraph{STANDARD} 
Standard CP methods applied to long-tail settings often exhibit substantial disparity between head and tail coverage. To quantify this effect, we evaluate APS-based Split CP on CIFAR100-LT under various target coverages $1-\alpha$. The details of experiments can be found in Appendix~\ref{Appendix_headtail_exp}. The results in Figure~\ref{fig:head_tail_coverage} show that head classes are consistently over-covered, with conditional coverage lying above the target, while tail classes are significantly under-covered, leading to a large head–tail gap.
The large imbalance of the label distribution causes this phenomenon because standard CP does not differentiate between classes, leading to frequent exclusion of minority classes from prediction sets. This observation motivates our approach to enforce fairer coverage across head and tail classes.

\subsection{Tail-Aware Conformal Prediction (TACP)}
To explicitly mitigate this head–tail coverage imbalance and improve the robustness of conformal prediction under long-tail settings, we propose the \textbf{T}ail-\textbf{A}ware \textbf{C}onformal \textbf{P}rediction (TACP) method.
TACP adaptively adjusts the label-ranking penalty by leveraging head–tail partition information, ensuring that prediction sets remain informative while reducing systematic coverage gaps between head and tail classes. Formally, the definition of TACP is given by:
\begin{align}
\label{tacp}
    s_{\text{TACP}}(\bm{x},y) = s(\bm{x},y) + \lambda\cdot\mathbb{I}(y \in \mathcal{G}_h)\cdot(o_{\bm{x}}(y)-k_r)^{+},
\end{align}
where $\lambda\in\mathbb{R}^{+}$ and $k_r\in\mathbb{N}$ are hyperparameters; $()^+$ denotes the ReLU function; $\mathbb{I}$ is the indicator function; and $o_{\bm{x}}(y) = \left| \left\{ y' \in \mathcal{Y} : \hat{\pi}_{y'}(\bm{x}) \geq \hat{\pi}_y(\bm{x}) \right\} \right|$ denotes the ranking of $y$ based on the estimated class posterior probability $\hat{\pi}(\bm{x})$.
Here, $s(\bm{x},y)$ can be any non-conformity scores. For example, the TACP-LAC method with $\lambda=1$ and $k_r=2$ can be derived by setting $s(\bm{x},y)$ as one minus estimated class posterior probability:
\begin{align*}
&s_{\text{TACP-LAC}}(\bm{x},y) = \underbrace{1-\hat{\pi}_y(\bm{x})}_{\text{LAC Score}}+\underbrace{\mathbb{I}(y \in \mathcal{G}_h)\cdot(o_{\bm{x}}(y)-2)^{+}}_{\text{Selective Rank Regularization}}.
\end{align*}

The key insight behind TACP is to utilize the LT information and selectively penalize label rankings for head, thereby reducing head-conditional coverage. To maintain the target marginal coverage level, this has the effect of increasing the tail-conditional coverage, thus narrowing the head-tail coverage gap. 
The label prediction set is then constructed:
\begin{align}
\label{TACP_set}
    \mathcal{C}_{\text{TACP}}(\bm{x}_{n+1}) :=\{y\in \mathcal{Y}: s_{\text{TACP}}(\bm{x}_{n+1},y)\leq \hat{q}_\alpha \}
\end{align}
where $\hat{q}_\alpha$ is the $\lceil
 (1-\alpha)(n+1)\rceil/n$ quantile of the calibration scores $\{s_{\text{TACP}}(\bm{X}_i,Y_i)\}_{i=1}^n$. Moreover, our TACP method enjoys the standard marginal coverage property:
\begin{theorem}[TACP Coverage Guarantee]
\label{marginal_cov}
 If $\{(X_i,Y_i)\}_{i=1}^n$ are i.i.d, then for any new i.i.d. draw $(X_{n+1},Y_{n+1})$ 
 \begin{align*}
     \mathbb{P}(Y_{n+1}\in \mathcal{C}_{\text{TACP}}(X_{n+1}))\geq 1-\alpha
 \end{align*}
 for the conformal prediction set $\mathcal{C}_{\text{TACP}}$ constructed in Eq.(\ref{TACP_set}). 
  Moreover, if we assume additionally a uniform random variable $u$ to ensure the scores $\{s_{\text{TACP}}(X_i,Y_i)\}_{i=1}^{n+1}$ are almost surely distinct,
\begin{align*}
    s_{\text{TACP}}(X_i,Y_i) = \;\,  s(X_i,Y_i) + {} \lambda\cdot\mathbb{I}(Y_i \in \mathcal{G}_h)\cdot(o_{X_i}(Y_i)-k_r+u)^{+}
\end{align*}
then the following upper bound holds
 \begin{align*}
       \mathbb{P}(Y_{n+1}\in \mathcal{C}_{\text{TACP}}(X_{n+1}))\leq (1-\alpha)+\frac{1}{n+1}.
 \end{align*}
\end{theorem}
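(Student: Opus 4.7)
The plan is to reduce the theorem to the standard split-conformal coverage guarantee by observing that $s_{\text{TACP}}$ qualifies as a legitimate non-conformity score. Concretely, $s_{\text{TACP}}(x,y)$ is built from (i) the base score $s(x,y)$, (ii) the fixed pretrained model $\hat\pi$ (through $o_x(y)$), (iii) the head set $\mathcal{G}_h$, and (iv) the hyperparameters $\lambda$ and $k_r$ --- all of which are fixed independently of the calibration and test samples. Once this is in place, both inequalities follow from the usual quantile-inversion argument applied to the augmented scores $V_i := s_{\text{TACP}}(X_i,Y_i)$.

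For the lower bound, the i.i.d.\ assumption on $\{(X_i,Y_i)\}_{i=1}^{n+1}$ transfers directly to exchangeability of $V_1,\dots,V_{n+1}$. By construction, the event $Y_{n+1}\in\mathcal{C}_{\text{TACP}}(X_{n+1})$ is equivalent to $V_{n+1}\le \hat q_\alpha$, where $\hat q_\alpha$ is the empirical $\lceil(1-\alpha)(n+1)\rceil/n$ quantile of $V_1,\dots,V_n$. Writing $k=\lceil(1-\alpha)(n+1)\rceil$ and $V_{(k)}$ for the $k$-th order statistic of the calibration scores, the standard split-CP quantile lemma then yields $\mathbb{P}(V_{n+1}\le V_{(k)})\ge k/(n+1)\ge 1-\alpha$, which is exactly the first claim.

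For the upper bound, the added uniform perturbation $u$ (taken i.i.d.\ across samples) inside the $(\cdot)^+$ term makes $V_1,\dots,V_{n+1}$ almost surely pairwise distinct. Exchangeability then promotes the rank of $V_{n+1}$ among the $n+1$ scores to be exactly uniform on $\{1,\dots,n+1\}$, so $\mathbb{P}(V_{n+1}\le V_{(k)}) = k/(n+1) \le (1-\alpha)+1/(n+1)$, as required.

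The only non-routine point --- and thus the main thing to be careful about --- is confirming exchangeability in the presence of the data-dependent-looking pieces $o_X(y)$ and $\mathcal{G}_h$. The former depends only on $X$ through the fixed $\hat\pi$, so it is simply a deterministic function of the feature and preserves the joint distribution under permutations. The latter must be treated as fixed with respect to the calibration and test samples (e.g., determined from the class priors $p_y$ or from a training split disjoint from the calibration set); under this convention, exchangeability of the $V_i$ is immediate and both bounds follow without any argument tailored to the selective-rank regularizer itself.
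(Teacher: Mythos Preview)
Your proposal is correct and follows essentially the same approach as the paper's proof: both reduce to the standard split-conformal argument by observing that $s_{\text{TACP}}$ is a fixed function applied to i.i.d.\ pairs, so the scores $V_i$ are exchangeable and the usual rank/quantile argument yields the lower and (with tie-breaking) upper bound. Your explicit check that $o_x(y)$ and $\mathcal{G}_h$ are fixed relative to the calibration and test samples---and hence do not disturb exchangeability---is a useful clarification that the paper leaves implicit.
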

We now present a theorem showing the effectiveness of TACP in reducing head–tail conditional coverage disparity.
\begin{theorem}[Improved Coverage Gap]
Let $E_{xy}$ be the event that the calibration dataset is fixed as $\{(X_i,Y_i)\}_{i=1}^n=\{(\bm{x}_1,y_1),\dots, (\bm{x}_n,y_n)\}$. Suppose $\mathcal{C}_{\text{TACP}}$ and $\mathcal{C}_{\text{STD}}$ are the prediction sets constructed by TACP and the STANDARD method, respectively, based on the same non-conformity score. Then there exists $k_r$, such that the group-conditional coverage gap between head and tail groups satisfies:
\begin{align*}
    &P(Y_{n+1} \in \mathcal{C}_{\text{TACP}}(X_{n+1}) \mid Y_{n+1} \in \mathcal{G}_h, E_{xy})
    - P(Y_{n+1} \in \mathcal{C}_{\text{TACP}}(X_{n+1}) \mid Y_{n+1} \in \mathcal{G}_t, E_{xy}) \\ 
\leq &P(Y_{n+1} \in \mathcal{C}_{\text{STD}}(X_{n+1}) \mid Y_{n+1} \in \mathcal{G}_h, E_{xy})
-P(Y_{n+1} \in \mathcal{C}_{\text{STD}}(X_{n+1}) \mid Y_{n+1} \in \mathcal{G}_t, E_{xy}).
\end{align*}   
\label{theorem2}
\end{theorem}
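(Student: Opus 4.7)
The plan is to condition on the calibration event $E_{xy}$ (so the STANDARD threshold $\hat{\tau}_\alpha$ becomes a fixed number) and then exhibit one specific value of $k_r$ for which the TACP quantile equals $\hat{\tau}_\alpha$; this decouples the threshold from the penalty and reduces the theorem to a pointwise comparison of the two scores. Concretely I take
\[
k_r^{\star} \;=\; 1 + \max\{\,o_{\bm{x}_i}(y_i) : i \in [n],\ y_i \in \mathcal{G}_h\},
\]
or any sufficiently large integer if no head label appears in the calibration sample. For every calibration index, the penalty $\lambda\,\mathbb{I}(y_i \in \mathcal{G}_h)\,(o_{\bm{x}_i}(y_i) - k_r^{\star})^{+}$ vanishes: either $y_i \notin \mathcal{G}_h$ and the indicator is zero, or $y_i \in \mathcal{G}_h$ and $o_{\bm{x}_i}(y_i) \leq k_r^{\star} - 1$ so the ReLU is zero. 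Hence $\{s_{\text{TACP}}(X_i,Y_i)\}_{i=1}^n = \{s(X_i,Y_i)\}_{i=1}^n$ as multisets, and their $\lceil (n+1)(1-\alpha)\rceil/n$ quantiles agree, so $\hat{q}_\alpha = \hat{\tau}_\alpha$.

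With the thresholds identified, the head and tail analyses fall out quickly. For a fresh test point with $Y_{n+1} \in \mathcal{G}_t$ the penalty vanishes by the indicator, so $s_{\text{TACP}}(X_{n+1},Y_{n+1}) = s(X_{n+1},Y_{n+1})$; combined with the threshold equality this makes the events $\{Y_{n+1} \in \mathcal{C}_{\text{TACP}}(X_{n+1})\}$ and $\{Y_{n+1} \in \mathcal{C}_{\text{STD}}(X_{n+1})\}$ coincide, giving equality of the two tail-conditional coverages under $E_{xy}$. On $Y_{n+1} \in \mathcal{G}_h$ the penalty is non-negative, so $s_{\text{TACP}}(X_{n+1},Y_{n+1}) \geq s(X_{n+1},Y_{n+1})$ and therefore $\{s_{\text{TACP}}(X_{n+1},Y_{n+1}) \leq \hat{q}_\alpha\} \subseteq \{s(X_{n+1},Y_{n+1}) \leq \hat{\tau}_\alpha\}$, which after taking conditional probabilities gives $P(Y_{n+1} \in \mathcal{C}_{\text{TACP}}(X_{n+1}) \mid Y_{n+1} \in \mathcal{G}_h, E_{xy}) \leq P(Y_{n+1} \in \mathcal{C}_{\text{STD}}(X_{n+1}) \mid Y_{n+1} \in \mathcal{G}_h, E_{xy})$. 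Subtracting the tail equality from this head inequality is exactly the asserted gap bound.

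The main conceptual obstacle is the joint dependence of $\hat{q}_\alpha$ and the test-time penalty on $k_r$: for an arbitrary $k_r$, the calibration threshold shifts \emph{upward} (since the TACP scores pointwise dominate $s$), which raises both head- and tail-conditional coverages, while the penalty simultaneously \emph{lowers} a head score's chance of being admitted, so the net sign on the head side is not obvious. The choice $k_r^{\star}$ sidesteps this coupling by freezing the calibration scores at their STANDARD values, which makes the head-side effect unambiguously non-positive and leaves the tail unchanged. A more ambitious statement---strict improvement for some smaller $k_r$ where the calibration quantile truly shifts---would need a finer accounting of the simultaneous upward move of $\hat{q}_\alpha$ and the $\lambda$-dependent downward move of head test scores, and this joint bookkeeping is where the real technical work would lie.
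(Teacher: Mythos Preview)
Your proof is correct and follows essentially the same route as the paper: both arguments exhibit a $k_r$ large enough that the TACP penalty vanishes on every calibration point---the paper simply takes any $k_r \geq K = |\mathcal{Y}|$, while you take the data-dependent $k_r^{\star} = 1 + \max_i\{o_{\bm{x}_i}(y_i):y_i\in\mathcal{G}_h\}$---which forces $\hat{q}_\alpha = \hat{\tau}_\alpha$ and reduces the claim to the pointwise inequality $s_{\text{TACP}} \geq s$ on the head side together with $s_{\text{TACP}} = s$ on the tail side. The paper additionally isolates the general threshold inequality $\hat{q}(k_r) \geq \hat{q}_0$ for \emph{all} $k_r$ as a separate lemma and uses it for the tail direction, but once one has exact threshold equality (as you do) that lemma is unnecessary, so your streamlined packaging loses nothing.
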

By introducing the selective rank regularization term, the adjusted acceptance threshold increases the probability of correctly including tail classes, while the head-conditional coverage is controlled by properly choosing parameter $\lambda$ and $k_r$, thereby narrowing the coverage gap compared with STANDARD method using the same non-conformity score. A complete proof is given in Appendix~\ref{Appendix_proof}.

\section{Soft TACP}
While TACP addresses coverage imbalance between head and tail groups,
a more fine-grained goal is to balance coverage across individual classes in LT settings~\cite{lofstrom2015bias}.
In this section, we extend TACP beyond the binary head–tail partition to promote coverage balance across all classes, i.e., moving towards class-conditional coverage rather than merely head–tail coverage.

In the original TACP formulation Eq.~\eqref{tacp}, the indicator term plays an crucial role in penalizing head classes, thereby reducing the head–tail coverage gap as guaranteed by Theorem~\ref{theorem2}.
However, this indicator term may be a hindrance when we want to move beyond the head-tail setting: according to the definition of $\mathcal{G}_{h}$ in Eq.~\eqref{headdef}, it naturally entails the head-tail partition, which may not align well with the goal of class-conditional coverage balance.

To address this limitation, we propose removing the dependence on the head–tail partition $\mathcal{G}_{h}$ and replacing the binary indicator with a soft, class-aware weighting scheme. This allows penalties to adapt continuously to class prior probabilities rather than enforcing a rigid head–tail split, enabling more nuanced control over coverage across classes.
\paragraph{Soft TACP} Building on this idea, we introduce the \textbf{s}oft \textbf{T}ail-\textbf{A}ware \textbf{C}onformal \textbf{P}rediction (sTACP) method, which generalizes TACP by continuously reweighting penalties according to the estimated class prior $\hat{p}(y)\in[0,1]$. Instead of the original hard $0$–$1$ indicator, sTACP employs a smooth weighting function that reflects both the long-tail structure and class-specific information:

\begin{align*}
s_{\text{sTACP}}(\bm{x},y) = s(\bm{x},y) +\lambda \cdot \hat{p}(y)\cdot(o_{\bm{x}}(y)-k_r)^{+},
\end{align*}
where $\lambda\in\mathbb{R}^{+}$ and $k_r\in \mathbb{N}$ are hyperparameters. This enables the label ranking penalty to adapt more smoothly to the long-tail distribution without dependency to the head-tail partition. The efficiency of this extension is experimentally validated in Section \ref{section_class-conditional}.

\section{Head-Tail Experiments}
\label{section_headtail_exp}
We evaluate the head- and tail-conditional coverages of STANDARD, Partition-Wise, and TACP methods on several versions of long-tail CIFAR100 and ImageNet datasets using four representative non-conformity scores. 

\subsection{Experimental Setup}
\paragraph{Datasets and Models}
In all experiments, the training, calibration, and test splits follow a long-tail distribution, ensuring that the conformal calibration process operates under highly imbalanced conditions.
For CIFAR100 \cite{CIFAR}, we construct long-tail variants following \citet{CIFAR100LT} with imbalance factors $\mu \in \{50, 100\}$, where $\mu = \frac{\max_i n_i}{\min_j n_j}$ and $n_k$ denotes the number of samples from class $k$. For ImageNet \cite{ImageNet}, we create long-tail subsets by sampling from a Pareto distribution with power parameter $\rho \in [0.3, 0.6]$ following \citet{Imagenetlt2019}. All methods are evaluated using pretrained models from \citet{MetaSAugmodel}. Table~\ref{modelsdatascifariamgenet} summarizes the datasets and models used in our experiments.

\begin{table}[!t]
    \fontsize{9pt}{12pt}\selectfont
    \centering
    \begin{tabular}{c|c|c|c|c}
    \toprule
Statistics&  \multicolumn{2}{c|}{CIFAR100 LT} & \multicolumn{2}{c}{ImageNet LT}   \\
\midrule
$\#$Label &100 &100& 1000&1000\\
\midrule
Pareto Power ($\rho$) & --- &--- & 0.3&0.6    \\
\midrule
Imbalance Rate ($\mu$) &  $100$ & $50$  & $8.3$ & $25$\\
\midrule
$\#$Train & 10626& 12354& 115846 &115846 \\
\midrule
$\#$Test & 5478& 6329 & 8442 & 2453\\
\midrule
Prediction Acc & 59.80$\%$ & 56.14$\%$& 48.21$\%$ &50.75$\%$\\
\bottomrule
    \end{tabular}
    \caption{Specification of datasets. We evaluate the pretrained models' performance on long-tail test datasets.}
    \label{modelsdatascifariamgenet}
\end{table}

\paragraph{Baselines}
We compare three methods: \textbf{STANDARD}, \textbf{Partition-Wise}, and \textbf{TACP}. The Partition-Wise approach aims to reduce the head–tail coverage gap by dividing the calibration set into head and tail subsets and applying conformal prediction independently within each group. Additional details on these methods, along with the hyperparameter tuning procedure for TACP, are provided in Appendix~\ref{Appendix_headtail_exp}.

\paragraph{Non-conformity Scores} We consider four scores: \textbf{APS}, which approximates X-conditional coverage \citep{DBLP:conf/nips/RomanoSC20}; \textbf{RAPS}, a regularized variant of APS to generate smaller prediction set \citep{RAPS2021}; \textbf{LAC}, defined as one minus the softmax output \citep{sadinle2019least}; \textbf{TOPK}, which yields uniformly prediction set for all test samples \citep{RAPS2021}. 
Definitions of the scores and the hyperparameters used in our experiments are provided in 
Appendix~\ref{Appendix_headtail_exp}. \\
\paragraph{Evaluation Metric}
Assume $\mathcal{I}_h=\{i \in [N_{\text{test}}] : y'_i \in \mathcal{G}_h\}$ and $\mathcal{I}_t=\{i \in [N_{\text{test}}] : y'_i \in \mathcal{G}_t\}$ %
be the indices of test examples $\{(\bm{x}'_j,y'_j)\}_{j=1}^{N_{\text{test}}}$ with head and tail classes, respectively. We give the metrics of head-tail coverage gap and average set size as follows:
\begin{align*}
    &\text{Cov-head}=100\times \frac{\sum_{i\in \mathcal{I}_h} \mathbb{I}( y'_i\in \mathcal{C}(\bm{x}'_i))}{|\mathcal{I}_h|},   ~\text{Cov-tail}=100\times \frac{\sum_{i\in \mathcal{I}_t} \mathbb{I}( y'_i\in \mathcal{C}(\bm{x}'_i))}{|\mathcal{I}_t|},\\
&\text{CovGap-HT} = |\text{Cov-head}- \text{Cov-tail}|,~\text{AvgSize} =\frac{1}{N_{\text{test}}}\sum \nolimits_{i=1}^{N_{\text{test}}}|\mathcal{C}(\bm{x}_{i})|.
\end{align*}

\subsection{Results}
We start with a brief summary of the experiments. In Experiment 1 (\textbf{Exp.1}), we compare the Cov-head and Cov-tail of STANDARD and TACP methods using four non-conformity scores on CIFAR100 LT and ImageNet LT datasets, showing that TACP can significantly reduce the CovGap-HT. In Experiment 2 (\textbf{Exp.2}), we evaluate the AvgSize and CovGap-HT of STANDARD, Partition-Wise, and TACP methods on several versions of CIAFR100 LT and ImageNet LT.

\paragraph{Exp.1: Cov-head and Cov-tail}
To evaluate the coverage disparity between head and tail classes, we compare the STANDARD and TACP methods on CIFAR100-LT ($\mu = 100$) and ImageNet-LT ($\rho = 0.3$) at a fixed partition $\eta = 50\%$, using four non-conformity scores. Specifically, we examine the results of Cov-head and Cov-tail.

Figure~\ref{headtail_total_cov} illustrates the behavior of different methods as the total coverage level $1-\alpha$ varies from $80\%$ to $99\%$ in steps of $1\%$. We observe that under LT label distributions, the STANDARD method suffers from a coverage bias between head and tail classes, with the Cov-head (solid curves) for all scores consistently lying above the diagonal and the Cov-tail (dashed curves) remaining below it. In contrast, the TACP method effectively narrows the gap between the Cov-head and the Cov-tail, as shown by the almost overlapping dashed and solid lines. 
The results show that, compared with STANDAND, TACP can significantly alleviate the head-tail coverage imbalance and mitigate the under coverage of tail classes across all scores and datasets in LT settings.

\begin{figure*}[tbp]
    \centering
    \begin{subfigure}[t]{0.46\textwidth}
        \centering
    \includegraphics[width=\linewidth]{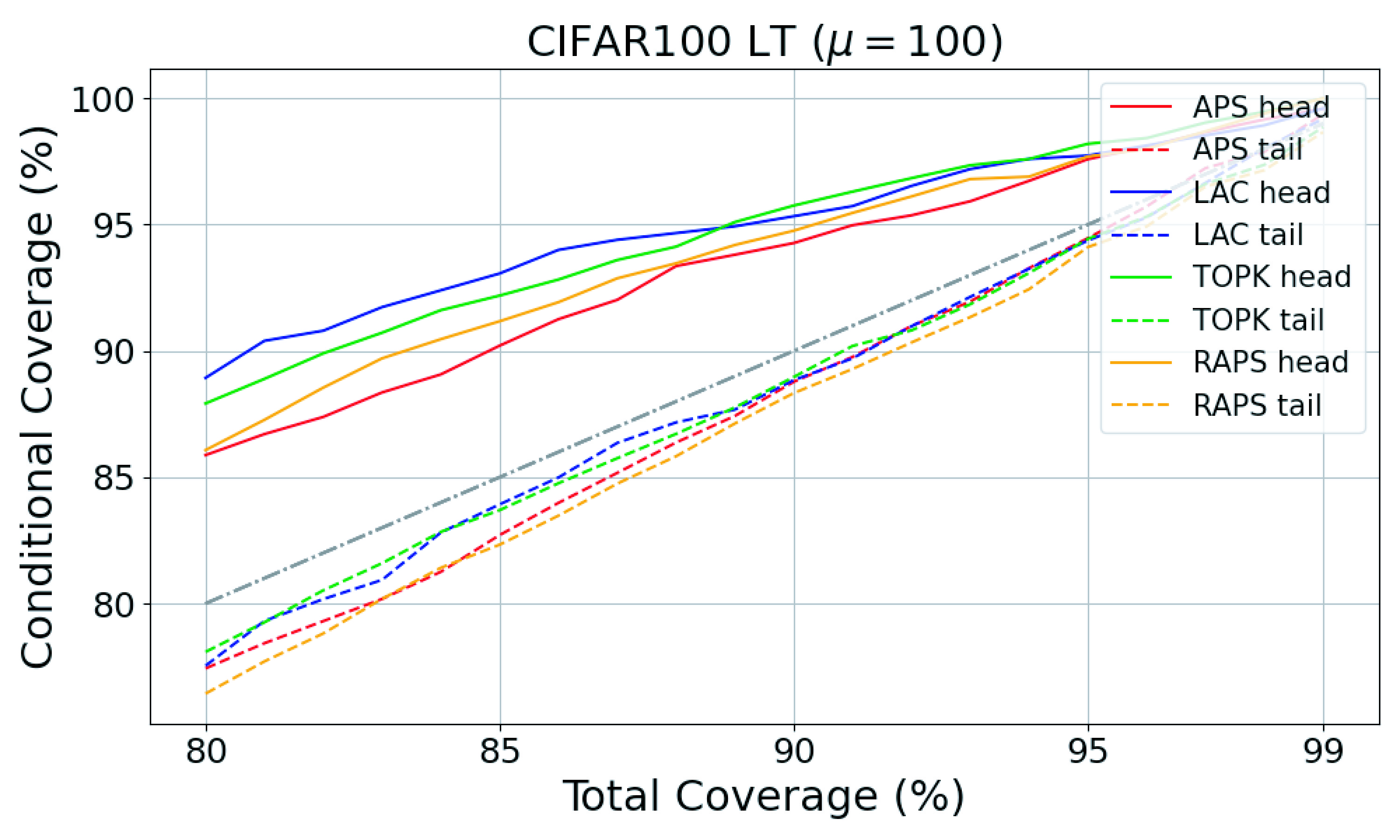}
    \end{subfigure}
    \begin{subfigure}[t]{0.46\textwidth}
        \centering
    \includegraphics[width=\linewidth]{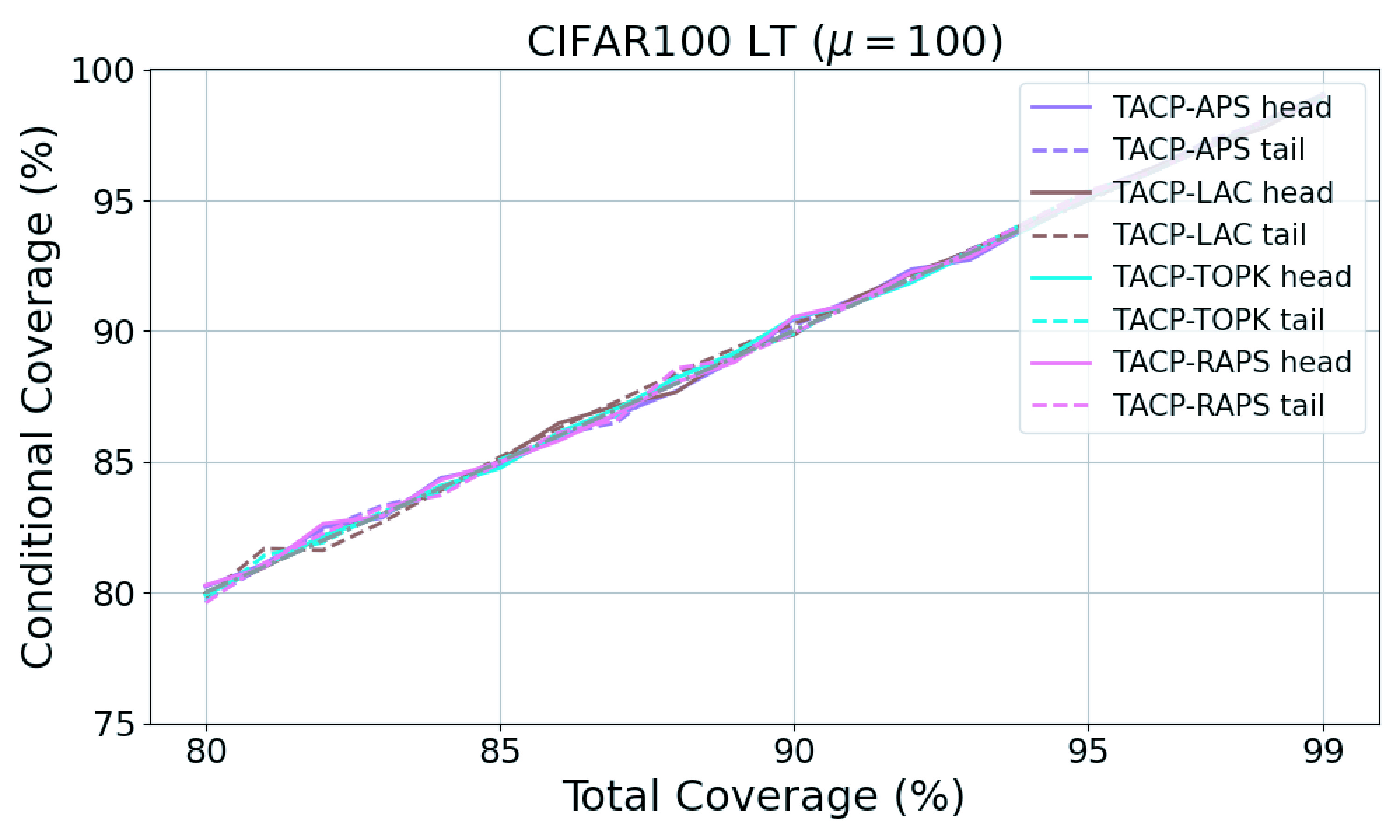}
    \end{subfigure}
    \begin{subfigure}[t]{0.46\textwidth}
        \centering
    \includegraphics[width=\linewidth]{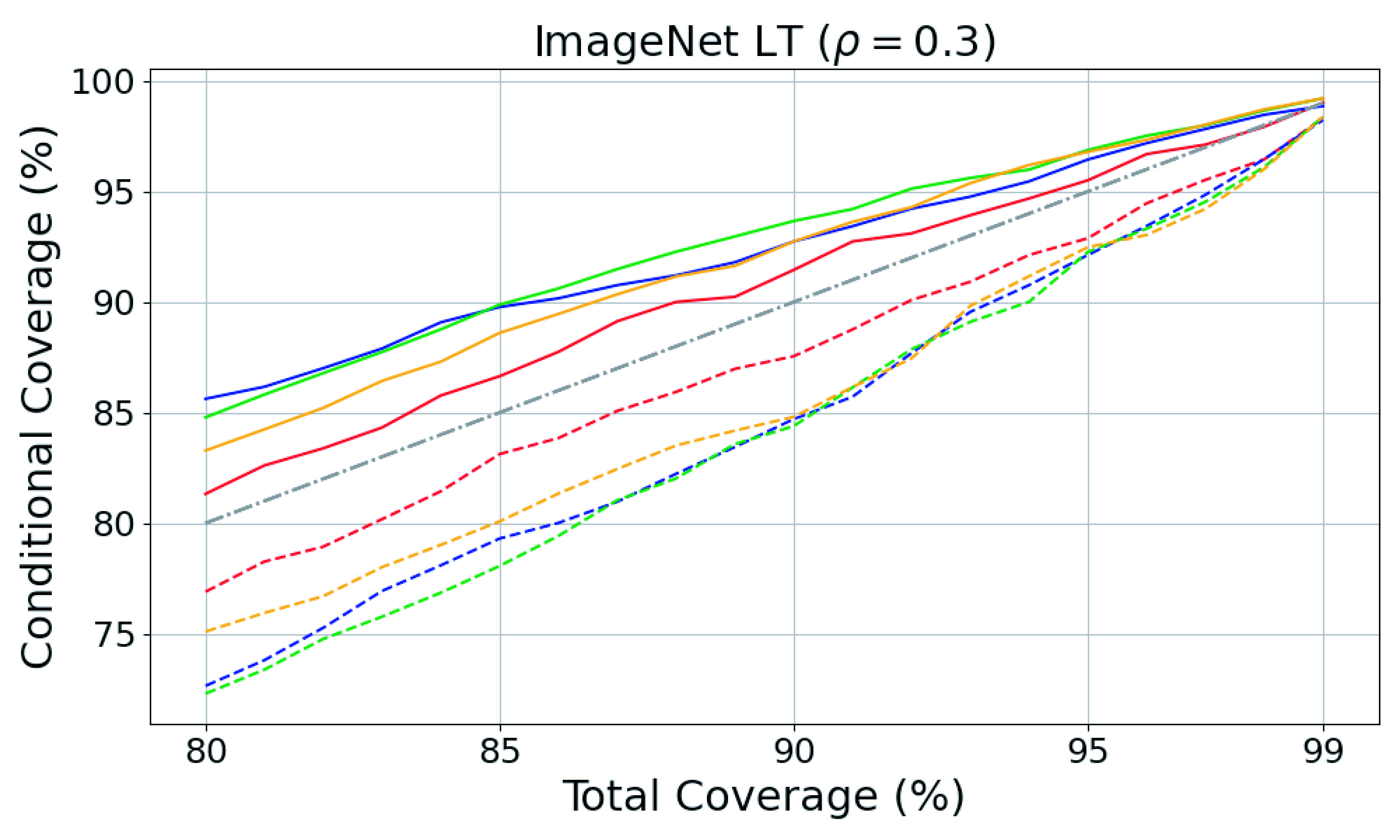}
    \end{subfigure}
    \begin{subfigure}[t]{0.46\textwidth}
        \centering
    \includegraphics[width=\linewidth]{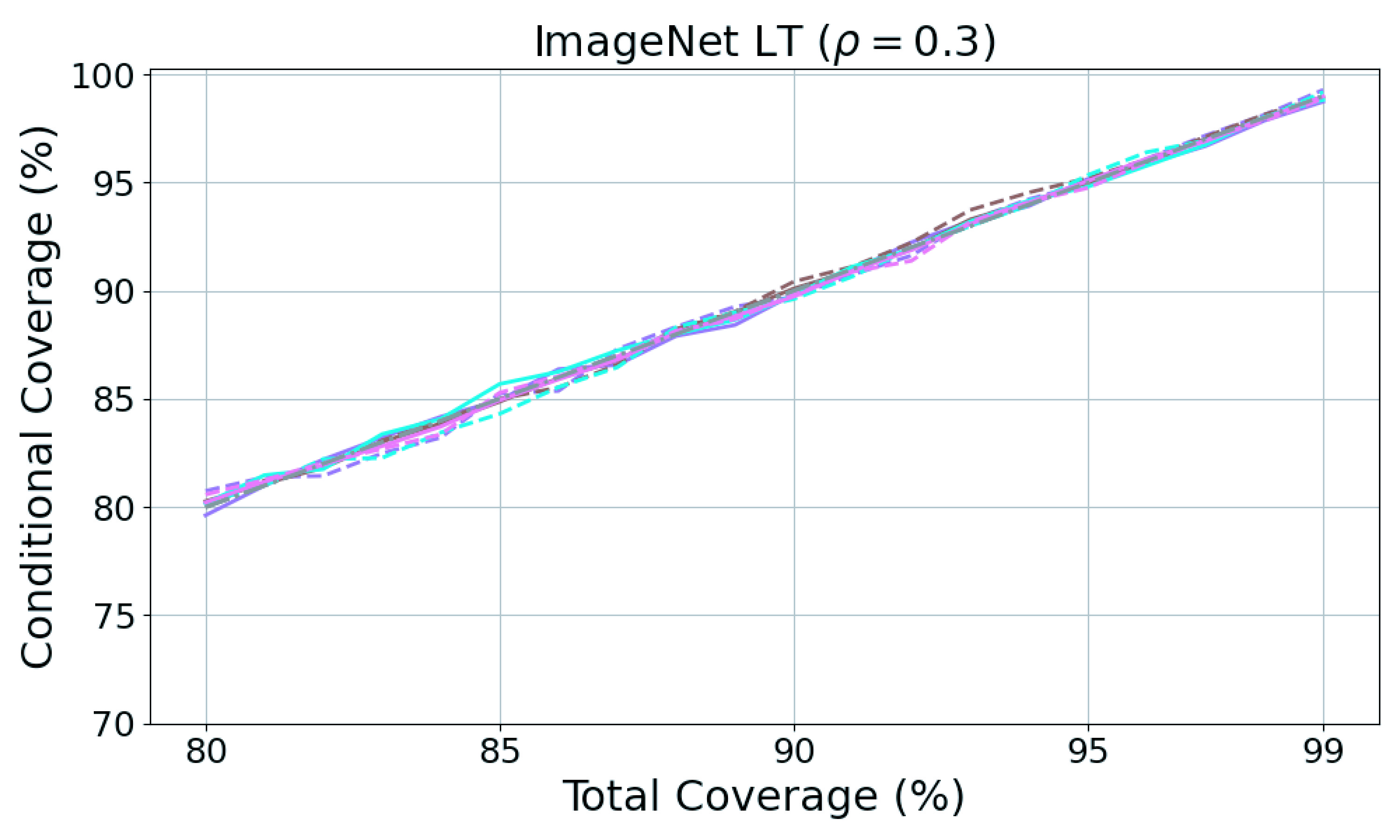}
    \end{subfigure}
    \caption{Head- and tail-conditional coverage versus total coverage for STANDARD (left column) and TACP (right column) across CIFAR100-LT ($\mu = 100$) and ImageNet-LT ($\rho = 0.3$). Results are shown for four non-conformity scores at $\eta = 50\%$. The gray dashed line indicates the ideal coverage (diagonal).}
    \label{headtail_total_cov}
\end{figure*}

\paragraph{Exp.2: CovGap-HT and AvgSize}
We compare the STANDARD (STA), Partition-Wise (PW), and TACP methods in terms of CovGap-HT and AvgSize across multiple LT datasets: CIFAR100-LT with $\mu \in \{50, 100\}$ and ImageNet-LT with $\rho \in \{0.3, 0.6\}$, evaluated at a miscoverage level $\alpha = 10\%$ and head–tail partition $\eta = 50\%$. The results are summarized in Table~\ref{CovGap_HT_table}.

We can observe that TACP consistently achieves a substantial reduction in CovGap-HT without compromising prediction set informativeness compared to the STANDARD method. For example, applying TOPK as base score on ImageNet LT ($\rho=0.6$), the CovGap-HT of the STANDARD-TOPK method is $8.83\%$, whereas TACP-TOPK reduces it to $1.02\%$ with only a slight increase in AvgSize from $23.15$ to $25.63$. 
Overall, these results demonstrate the effectiveness of TACP in narrowing CovGap-HT while maintaining efficient prediction sets.
Additional results over a broader range of $\eta$ (Appendix~\ref{Appendix_headtail_exp}) further confirm the robustness of TACP.

\begin{table*}[tbp]
    \centering
    \scalebox{0.85}{
    \begin{tabular}{c|c|cc|cc|cc|cc}
\toprule
\multirow{2}{*}{Score} & \multirow{2}{*}{Method} & \multicolumn{2}{c|}{$\rho=0.6$}& \multicolumn{2}{c|}{$\rho=0.3$}& \multicolumn{2}{c|}{$\mu=100$}& \multicolumn{2}{c}{$\mu=50$}\\
\cmidrule{3-4} \cmidrule{5-6} \cmidrule{7-8} \cmidrule{9-10}
&&CovGap-HT & AvgSize&CovGap-HT & AvgSize&CovGap-HT & AvgSize&CovGap-HT & AvgSize \\
\midrule
\multirow{3}{*}{APS}
&STA
&2.18$\pm$1.21&36.43$\pm$2.21&2.79$\pm$0.85&35.29$\pm$2.29
&4.81$\pm$0.89&9.93$\pm$0.45&3.87$\pm$0.81&8.58$\pm$0.32
\\
&PW
&2.01$\pm$1.29&38.10$\pm$2.65&1.43$\pm$1.20&38.35$\pm$2.63
&1.38$\pm$1.19&10.28$\pm$0.43&1.22$\pm$0.84&8.79$\pm$0.25
\\
&TACP
&\textbf{1.11$\pm$0.71}&33.98$\pm$2.40&\textbf{0.76$\pm$0.60}&34.07$\pm$2.22
&\textbf{0.78$\pm$0.64}&10.25$\pm$0.46&\textbf{0.76$\pm$0.59}&8.55$\pm$0.26
\\
\midrule
\multirow{3}{*}{LAC}
&STA
&7.45$\pm$1.33&15.36$\pm$0.96&7.50$\pm$0.73&14.81$\pm$1.00
&6.35$\pm$1.01&7.11$\pm$0.29&4.88$\pm$0.91&6.16$\pm$0.26
\\
&PW
&1.96$\pm$1.28&17.22$\pm$1.30&1.25$\pm$1.25&17.72$\pm$1.20
&1.65$\pm$1.18&7.44$\pm$0.35&1.56$\pm$1.13&6.36$\pm$0.25
\\
&TACP
&\textbf{1.18$\pm$0.94}&17.92$\pm$1.40&\textbf{0.81$\pm$0.67}&19.16$\pm$1.91
&\textbf{0.77$\pm$0.59}&7.86$\pm$0.36&\textbf{0.66$\pm$0.52}&6.47$\pm$0.25
\\
\midrule
\multirow{3}{*}{TOPK}
&STA
&8.83$\pm$1.69&23.15$\pm$1.45&9.45$\pm$0.89&20.77$\pm$1.22
&6.47$\pm$0.84&10.90$\pm$0.50&5.51$\pm$0.82&8.58$\pm$0.36
\\
&PW
&1.87$\pm$1.53&24.06$\pm$2.46&1.31$\pm$0.98&24.25$\pm$1.98
&1.33$\pm$1.16&11.01$\pm$0.47&1.27$\pm$0.97&8.71$\pm$0.40
\\
&TACP
&\textbf{1.02$\pm$0.84}&25.63$\pm$2.44&\textbf{0.58$\pm$0.48}&26.26$\pm$2.24
&\textbf{0.82$\pm$0.61}&11.88$\pm$0.57&\textbf{0.77$\pm$0.57}&9.38$\pm$0.39
\\
\midrule
\multirow{3}{*}{RAPS}
&STA
&7.76$\pm$1.25&16.49$\pm$1.30&7.34$\pm$0.71&15.98$\pm$1.04
&6.34$\pm$0.80&7.67$\pm$0.52&4.58$\pm$0.75&6.62$\pm$0.18
\\
&PW
&1.91$\pm$1.43&20.32$\pm$1.56&1.27$\pm$1.00&20.57$\pm$1.48
&1.49$\pm$1.16&8.93$\pm$0.50&1.24$\pm$0.88&6.48$\pm$0.23
\\
&TACP
&\textbf{1.10$\pm$0.87}&20.67$\pm$1.58&\textbf{0.61$\pm$0.41}&21.76$\pm$2.03
&\textbf{0.82$\pm$0.70}&9.62$\pm$0.68&\textbf{0.80$\pm$0.58}&6.76$\pm$0.24
\\
\bottomrule
    \end{tabular}
    }
    \caption{We report the average results$\pm$standard deviation over 100 different trails at $\alpha=10\%$ and $\eta=50\%$. Bold indicates the optimal CovGap-HT among all approaches for each non-conformity score.}
    \label{CovGap_HT_table}
\end{table*}

\section{Class-conditional Experiments}
\label{section_class-conditional}
In this section, we compare the class-conditional results of STANDARD, CLASSWISE, CLUSTER, RC3P, and sTACP on ImageNet-LT across four non-conformity scores.
\subsection{Experimental Setup}
\textbf{Baselines}
 We use the same datasets and pretrained models described in Section~\ref{section_headtail_exp}. 
We evaluated five methods: STANDARD, CLASSWISE, CLUSTER, RC3P, and sTACP.
Details of these baselines and hyperparameters for finetuning of sTACP are provided in Appendix~\ref{Appendix_classwise-exp}.\\
\textbf{Evaluation Metric}
We evaluate the performance on three metrics: \textbf{coverage}, which measures the empirical marginal coverage; \textbf{set size} (AvgSize), which reflects the efficiency of the prediction set; \textbf{class-conditional coverage gap} (CovGap), which quantifies coverage disparities across classes.
Details are provided in Appendix~\ref{Appendix_classwise-exp}.
\subsection{Results}
We provide a brief overview of the experimental setup. Experiment 3 (\textbf{Exp.3}) reports CLASSWISE results on ImageNet-LT using four different scores. Experiment 4 (\textbf{Exp.4}) compares STANDARD, CLUSTER, RC3P, and sTACP across two long-tail variants of ImageNet-LT. Experiment 5 (\textbf{Exp.5}) examines performance on ImageNet-LT ($\rho = 0.6$) under a different target coverage, with $\alpha = 5\%$.

\paragraph{Exp.3: CLASSWISE Method}
To examine the limitations of the CLASSWISE (CW) method under LT settings, we conduct an experiment on ImageNet-LT ($\rho=0.6$) at $\alpha=10\%$ using four different non-conformity scores. As shown in Table~\ref{classwise_method_imagenet}, the CW approach frequently outputs the entire label set $\mathcal{Y}$, leading to a trivial average coverage of $100\%$. Such excessively large prediction sets are uninformative and of limited value for downstream tasks.
\begin{table}[!t]
    \centering       
    \fontsize{9pt}{12pt}\selectfont
    \begin{tabular}{cccc}
       \toprule
Score &CovGap &Coverage& AvgSize \\
\midrule
APS&10.00$\pm$0.02&99.46$\pm$0.31&944.48$\pm$2.42\\
LAC&9.98$\pm$0.01&99.51$\pm$0.22&942.51$\pm$1.43\\
TOPK&9.98$\pm$0.02&99.56$\pm$0.27&943.16$\pm$1.23\\
RAPS&10.00$\pm$0.02&99.54$\pm$0.33&943.15$\pm$1.22\\
\bottomrule
    \end{tabular}
 \caption{Performance of CLASSWISE method on ImageNet LT ($\rho=0.6$) at $\alpha=10\%$.}
    \label{classwise_method_imagenet}
\end{table}

\paragraph{Exp.4: sTACP Method} 
We compare STANDARD (STA), CLUSTER (CLUS), RC3P, and sTACP methods on two versions of ImageNet LT $\rho\in\{0.3,0.6\}$ at $\alpha=10\%$.
As shown in Table~\ref{imagenet-classwise}, sTACP consistently reduces CovGap with only a modest increase in AvgSize compared with other baselines.
For instance, the CovGap of STA and sTACP decreases from $19.00\%$ to $15.86\%$ when applying APS as the base score on ImageNet LT ($\rho=0.3$), whereas their AvgSize remains similar.
In summary, sTACP method consistently outperforms the baselines under LT distributions, achieving smaller class-conditional coverage gaps with the same score.

\begin{table*}[!t]
  \centering
   \fontsize{9pt}{12pt}\selectfont
  \begin{tabular}{c|c|ccc|ccc}
   \toprule
   \multirow{2}{*}{Score} &   \multirow{2}{*}{Method}& \multicolumn{3}{c|}{$\rho=0.6$}&  \multicolumn{3}{c}{$\rho=0.3$}   \\
    \cmidrule(r){3-5} \cmidrule(r){6-8} 
& & CovGap & Coverage & AvgSize & CovGap & Coverage &AvgSize \\
\midrule
\multirow{5}{*}{APS}&STA&19.00$\pm$0.00&89.78$\pm$0.01&35.60$\pm$2.27
&13.37$\pm$0.35&90.07$\pm$0.61&38.39$\pm$1.54
\\
&CLUS
&17.84$\pm$0.00&89.66$\pm$0.01&37.54$\pm$1.43
&13.41$\pm$0.00&90.62$\pm$0.00&42.80$\pm$1.28
\\
&RC3P
&18.99$\pm$0.61&89.31$\pm$0.65&30.20$\pm$1.88
&14.21$\pm$0.31&89.45$\pm$0.45&43.50$\pm$2.22\\
&sTACP
&\textbf{15.86$\pm$0.81}&89.53$\pm$1.12&36.52$\pm$3.66
&\textbf{12.76$\pm$0.31}&90.04$\pm$0.61&35.48$\pm$2.08
\\
\midrule
\multirow{5}{*}{LAC}& STA&18.87$\pm$0.01&90.20$\pm$0.01&15.30$\pm$0.81
&14.35$\pm$0.36&89.90$\pm$0.63&16.94$\pm$0.67\\
&CLUS
&17.80$\pm$0.00&89.10$\pm$0.00&16.19$\pm$0.05
&14.05$\pm$0.00&90.82$\pm$0.00&19.59$\pm$0.10\\
&RC3P
&19.17$\pm$0.59&89.19$\pm$0.70&30.45$\pm$2.09
&14.24$\pm$0.32&89.55$\pm$0.45&43.10$\pm$2.00\\
&sTACP
&\textbf{15.98}$\pm$0.78&89.54$\pm$1.07&37.57$\pm$3.40
&\textbf{12.66$\pm$0.28}&90.09$\pm$0.59&41.72$\pm$1.72\\
\midrule
\multirow{5}{*}{TOPK}& STA&  18.87$\pm$0.01&90.12$\pm$0.01&23.28$\pm$1.23
&14.38$\pm$0.41&89.96$\pm$0.61&23.55$\pm$1.02\\
&CLUS
&17.59$\pm$0.00&89.57$\pm$0.00&24.24$\pm$0.15
&13.89$\pm$0.00&91.61$\pm$0.00&27.96$\pm$0.20\\
&RC3P 
&18.99$\pm$0.61&89.39$\pm$0.67&31.01$\pm$2.15
&14.21$\pm$0.31&89.67$\pm$0.47&45.62$\pm$2.32\\
&sTACP
&\textbf{15.83$\pm$0.77}&89.55$\pm$1.08&38.99$\pm$3.54
&\textbf{12.80$\pm$0.28}&90.05$\pm$0.58&37.38$\pm$1.71
\\
\midrule
\multirow{5}{*}{RAPS}&STA
&18.45$\pm$0.93&89.98$\pm$1.07&16.49$\pm$1.30
&14.25$\pm$0.33&89.98$\pm$0.60&18.32$\pm$1.04\\
&CLUS
&17.69$\pm$0.00&90.13$\pm$0.00&18.53$\pm$0.09
&13.92$\pm$0.00&90.99$\pm$0.00&23.54$\pm$1.25
\\
&RC3P
&18.99$\pm$0.61&89.35$\pm$0.66&30.92$\pm$2.14
&14.19$\pm$0.31&89.70$\pm$0.46&45.56$\pm$2.31\\
&sTACP
&\textbf{16.04$\pm$0.82}&89.54$\pm$1.10&33.68$\pm$3.45
&\textbf{12.90$\pm$0.31}&90.04$\pm$0.61&32.85$\pm$1.65
\\
\bottomrule
  \end{tabular}
  \caption{Performance on ImageNet LT at $\alpha=10\%$. We report the average results$\pm$standard deviation for 100 different trials. Bold indicates the optimal CovGap for each non-conformity score.}  
\label{imagenet-classwise}
\end{table*}

\paragraph{Exp.5: Evaluation at Stricter Coverage} To further evaluate robustness under stricter coverage requirements, we apply the APS score to all baselines at $\alpha = 5\%$ on ImageNet-LT ($\rho = 0.6$). Table~\ref{ImageNetClasswise_alpha_0.05} shows that CW generates uninformative prediction sets with an average size of 984. RC3P fails in this setting, likely due to the instability of class-specific top-$k$ error estimates under severe data imbalance.
For CLUSTER, the combination of long-tail distributions and a high coverage requirement causes the clustering to collapse into a single “null” cluster, effectively reducing the method to STANDARD and rendering its results uninformative.

In contrast, sTACP achieves substantially smaller CovGap while maintaining efficient prediction sets.
Additional results using the LAC at $\alpha = 5\%$ are reported in Appendix~\ref{Appendix_classwise-exp}.

\begin{table}[!t]
  \centering
   \fontsize{9pt}{12pt}\selectfont
  \begin{tabular}{cccc}
   \toprule
Method & CovGap & Coverage & AvgSize \\
    \midrule
STA&9.62$\pm$0.01 &94.88$\pm$0.01& 67.49$\pm$5.62\\
CW&5.00$\pm$0.01&99.90$\pm$0.16& 983.53$\pm$2.86\\
CLUS&{———}&{———}&{———}\\
RC3P&15.28$\pm$0.69&89.57$\pm$0.66&31.80$\pm$2.28\\
sTACP&8.08$\pm$0.00&95.30$\pm$0.00&76.83$\pm$0.17\\
\bottomrule
  \end{tabular}
  \caption{Performance with base score APS on ImageNet LT ($\rho=0.6$) at $\alpha=5\%$.}  
    \label{ImageNetClasswise_alpha_0.05}
\end{table}

\section{Conclusion}
In this paper, we propose TACP to achieve a more balanced performance across head and tail labels under long-tail distributions. By selectively penalizing label ranks for head labels, TACP reduces the prediction set size for head labels, which indirectly improves the tail-conditional coverage, thus narrowing the conditional coverage gap. 
We also provide a theoretical analysis demonstrating TACP's effectiveness in reducing this gap. We further proposed sTACP method to reduce the class-conditional coverage gap in long-tail settings. Experimental results demonstrate that our proposed methods consistently improve head-tail and class-conditional coverage gap across several benchmarks.

\newpage
\appendix

\section{Details of Head-Tail Experiments}
\label{Appendix_headtail_exp}

We implemented all the methods by Pytorch \cite{pytorch} and conducted all the experiments on NVIDIA GeForce RTX 4090 GPUs.

\subsection{Partition-Wise (PW) Method} 
The PW method mitigates the coverage imbalance by partitioning the calibration set $\{(X_i,Y_i)\}_{i=1}^n$ into two subsets according to Eq.~\eqref{headdef}: a head set containing samples from head classes and a tail set containing samples from tail classes, and then computing the acceptance thresholds for each set,
\begin{align*}
    &\hat{\tau}_h = \text{Quantile}(\frac{\lceil(1-\alpha)(1+n_h)\rceil}{n_h},\{s(X_i,Y_i)\}_{i\in\mathcal{I}_h}),\\
    &\hat{\tau}_t = \text{Quantile}(\frac{\lceil(1-\alpha)(1+n_t)\rceil}{n_t},\{s(X_i,Y_i)\}_{i\in\mathcal{I}_t}),
\end{align*}
where $\mathcal{I}_h$ and $\mathcal{I}_t$ are the indices of calibration samples from head and tail classes, with corresponding sample sizes $n_h$ and $n_t$. The conformal prediction set is then defined as:
\begin{align*}
   & \mathcal{C}_{\text{PW}}(X_{n+1})=\{y: s(X_{n+1},y) \leq \hat{\tau}_g \} \text{ for } y \in \mathcal{G}_g
\end{align*}
where $g \in \{h,t\}$ denotes the group that $y$ belongs to.
This method provides a partition-conditional coverage guarantee,
\begin{lemma}[Partition-conditional Coverage Gaurantee]
\label{PW_Cov}
The prediction sets $\mathcal{C}=\mathcal{C}_{\text{PW}}$ from the Partition-Wise method achieve partition-conditional coverage:
\begin{align*}
    &\mathbb{P}(Y_{n+1}\in\mathcal{C}(X_{n+1})| Y_{n+1}\in\mathcal{G}_h)\geq 1-\alpha,\\
    &\mathbb{P}(Y_{n+1}\in\mathcal{C}(X_{n+1})| Y_{n+1}\in\mathcal{G}_t)\geq 1-\alpha
\end{align*}
where $\alpha$ is the pre-defined miscoverage level.
\end{lemma}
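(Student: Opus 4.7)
}
The plan is to reduce the claim to two independent applications of the standard split conformal coverage guarantee, one within each group. Fix $g \in \{h, t\}$. The key observation is that, conditional on $Y_{n+1} \in \mathcal{G}_g$ and on the vector of calibration labels $(Y_1,\dots,Y_n)$, the subset of calibration points $\{(X_i,Y_i) : Y_i \in \mathcal{G}_g\}$ together with the test point $(X_{n+1},Y_{n+1})$ consists of $n_g+1$ i.i.d.\ draws from the group-conditional distribution $p(\bm{x},y \mid y \in \mathcal{G}_g)$. This follows because the $n+1$ original pairs are i.i.d., so conditioning on the label of each pair lying in $\mathcal{G}_g$ yields independent draws from the same conditional law, and the features are exchangeable across the in-group indices.

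Given this exchangeability, I would apply the standard split CP argument to the non-conformity scores $\{s(X_i,Y_i)\}_{i\in \mathcal{I}_g} \cup \{s(X_{n+1},Y_{n+1})\}$. Since $\hat{\tau}_g$ is defined as the $\lceil(1-\alpha)(1+n_g)\rceil/n_g$ empirical quantile of the $n_g$ in-group calibration scores, the standard quantile lemma gives
\begin{align*}
\mathbb{P}\bigl(s(X_{n+1},Y_{n+1}) \leq \hat{\tau}_g \,\big|\, Y_{n+1}\in \mathcal{G}_g,\, n_g\bigr) \;\geq\; 1-\alpha,
\end{align*}
on the event $n_g \geq 1$. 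Since the PW set is $\mathcal{C}_{\text{PW}}(X_{n+1}) = \{y: s(X_{n+1},y)\leq \hat{\tau}_g\}$ whenever $y\in\mathcal{G}_g$, the event $\{s(X_{n+1},Y_{n+1})\leq \hat{\tau}_g\}$ coincides with $\{Y_{n+1}\in \mathcal{C}_{\text{PW}}(X_{n+1})\}$ under the conditioning. Finally, I would integrate out $n_g$ by the tower property to obtain $\mathbb{P}(Y_{n+1}\in\mathcal{C}_{\text{PW}}(X_{n+1}) \mid Y_{n+1}\in\mathcal{G}_g)\geq 1-\alpha$, and apply the argument symmetrically for both $g=h$ and $g=t$.

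The main obstacle is handling the edge case $n_g=0$, which can occur with positive probability when the tail group is extremely sparse. In that regime the empirical quantile $\hat{\tau}_g$ is undefined, so the statement must implicitly assume a convention (e.g.\ $\hat{\tau}_g=+\infty$, which trivially covers) or implicitly condition on $n_g\geq 1$. Assuming such a standard convention, the rest of the argument is a direct transcription of the split CP proof to the conditional distribution, and I would spell out only the exchangeability step carefully since it is where the group-conditional structure enters.
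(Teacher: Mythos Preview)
Your approach is correct and matches what the paper does: the paper does not give a standalone proof but simply notes that the argument is the same as for the CLASSWISE (Mondrian) method of Vovk and Sadinle--Lei--Wasserman, which is exactly the reduction you carry out---apply split CP separately within each group and use exchangeability of the in-group scores.

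One small technical point to tighten: the conditioning should be on the \emph{group-membership indicators} $\{\mathbb{I}(Y_i\in\mathcal{G}_g)\}_{i=1}^{n}$ (equivalently on $\mathcal{I}_g$), not on the full label vector $(Y_1,\dots,Y_n)$. If you fix the actual labels, the in-group calibration pairs have deterministic $Y_i$ and are therefore not i.i.d.\ from $p(\bm{x},y\mid y\in\mathcal{G}_g)$, nor exchangeable with the test pair whose label is only constrained to lie in $\mathcal{G}_g$. Conditioning on group membership alone is exactly enough to make the $n_g+1$ in-group pairs i.i.d.\ from the group-conditional law, after which your quantile and tower-property steps go through verbatim. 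Your handling of the $n_g=0$ edge case via the $\hat{\tau}_g=+\infty$ convention is also the standard resolution.
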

The proof of Lemma~\ref{PW_Cov} follows the similarity to that of the CLASSWISE method \cite{sadinle2019least,DBLP:journals/ml/Vovk13}. 
However, previous studies \cite{DBLP:conf/nips/DingABJT23} have shown that when $n_h$ or $n_t$ is small, the resulting coverage estimations can exhibit high variance and deviate significantly from the target level $1-\alpha$.

\subsection{Definitions of Non-conformity Scores}

\begin{itemize}
    \item APS: The Adaptive Prediction Sets (APS) method proposed by \citet{romano2019conformalized} aims to approximate X-conditional coverage and is defined as follows,
        \begin{align}
        \label{APS_Score}
        s_{\text{APS}}(\bm{x},y)=
        \sum_{i=1}^{o_{\bm{x}}(y)-1}\hat{\pi}_{(i)}(\bm{x})+u\cdot\hat{\pi}_{y}(\bm{x}),
    \end{align}
    where $o_{\bm{x}}(y)$ is the rank of label $y$ under the estimated posterior class probabilities $\hat{\pi}(\bm{x})$ and $u\sim U[0,1]$ is a random tie-breaking variable.
    \item LAC: The Least Ambiguous Set-valued Classifier (LAC) method \cite{sadinle2019least} is  theoretically guaranteed to produce the smallest valid prediction set under the assumption of accurate class posterior estimates,
    \begin{align*}
        s_{\text{LAC}}(\bm{x},y)=1-\hat{\pi}_{y}(\bm{x}).
    \end{align*}
    \item TOPK: The TOPK method, introduced in \citet{RAPS2021} constructs a fixed-size prediction set by selecting the top-$k$ labels with the highest estimated probabilities, where $k$ is chosen to meet the target coverage level,
       \begin{align}
       \label{TOPK-score}
        s_{\text{TOPK}}(\bm{x},y)=o_{\bm{x}}(y)+u,
    \end{align}
    where $o_{\bm{x}}(y)$ is the rank of label $y$ in descending order of predicted probabilities $\hat{\pi}(\bm{x})$ and $u\sim U[0,1]$ is a uniform random variable for tie-breaking.
    \item RAPS: To improve the efficiency of APS, \citet{RAPS2021} introduced the regularization APS method, which adds a penalty for including low-ranked labels, 
            \begin{align*}
        s_{\text{RAPS}}(\bm{x},y)=
        \sum_{i=1}^{o_{\bm{x}}(y)-1} \hat{\pi}_{(i)}(\bm{x})+u\cdot\hat{\pi}_{y}(\bm{x})+\lambda_{\text{RAPS}}(o_{\bm{x}}(y)-k_{\text{RAPS}})^{+},
    \end{align*}
    where $\lambda_{\text{RAPS}}\in \mathbb{R}^+$, $k_{\text{RAPS}}\in\mathbb{N}$ are hyperparameters, and $(\cdot)^+$ denotes the ReLU function.
    For all methods compared in Section~\ref{section_headtail_exp}, includuing STANDARD, PW, and TACP, 
    we set $k_{\text{RAPS}}=8$ and $\lambda_{\text{RAPS}}=0.01$ for ImageNet LT, and $k_{\text{RAPS}}=5$ with $\lambda_{\text{RAPS}}=0.01$ on CIFAR100 LT.
\end{itemize}

\subsection{Parameters Finetuning of TACP}
While several values of $k_r$ and $\lambda$ can reduce the coverage gap, certain values will yield smaller head-tail coverage gaps. 
In our experiments, $\lambda$ and $k_r$ are chosen using the calibration dataset.
We select $k_r$ from a candidate set $\{1,2,\dots,10\}$ for CIFRAR100 LT and from $\{1,2,\dots,15\}$ for ImageNet LT. 
For APS and LAC non-conformity scores, $\lambda$ is chosen from $\{0.001,0.01,0.02,0.05,0.1,0.5,1\}$. In the case of RAPS score, we set $(\lambda_{\text{reg}},k_{\text{reg}})=(0.01,5)$ for CIAR100 LT  and $(\lambda_{\text{reg}},k_{\text{reg}})=(0.01,8)$ for ImageNet LT, using the same $\lambda$ candidate values for APS and LAC.
Since TOPK non-conformity score Eq.~\eqref{TOPK-score} already incorporates ranking information and tends to be much larger in magnitude than a softmax‐based score, we choose $\lambda$ from $\{1,2,3,4,5\}$ instead.

\subsection{Results of Varying Head-Tail Partition}
We analyze the STANDARD, PW, and TACP methods in partition $\eta =50\%$ on CIFAR100 LT and ImageNet LT in Table~\ref{CovGap_HT_table}. We also compare STADNARD and TACP methods at $\eta = 70\%$ on CIFAR100 LT in Figure~\ref{fig:head_tail_coverage}.
To further evaluate the robustness of our method, we provide additional experimental results with a wider range of candidate values for the partition parameter $\eta$.

\paragraph{Fixed $\lambda$ and $k_r$} 
We evaluate the head- and tail-conditional coverages at $\alpha=10\%$ for both the STANDARD and TACP methods using four non-conformity scores on CIFAR100 LT ($\mu=100$) and ImageNet LT ($\rho=0.3$), under varying partition values $\eta$ from $10\%$ to $60\%$ in steps of $10\%$. For the TACP method, we use a fixed pair of $(k_r,\lambda)$ across all values of $\eta$.
Figure~\ref{fixed_kr_covgap} shows that the TACP method outperforms the STANDARD across all scores, illustrated by the reduced gap between head and tail curves.
Overall, our TACP method achieves a smaller coverage gap between head and tail, even with fixed $\lambda$ and $k_r$ for each value of $\eta$.

\paragraph{Partition-driven $\lambda$ and $k_r$}
To compare the performance of STANDARD, Partition-Wise, and TACP methods, we further conduct experiments on ImageNet LT ($\rho=0.3$) using APS, LAC, and TOPK non-conformity scores under varying partition values $\eta$ from $10\%$ to $60\%$ in steps of $10\%$. For TACP, we select $\lambda$ and $k_r$ in a partition-driven manner.

Figure~\ref{varying_kr_covgap} illustrates that both TACP and PW reduce the coverage gap compared with STA methods across all three non-conformity scores. For example, at $\eta=40\%$, the CovGap of STA-APS is about $3\%$, whereas PW-APS reduces it to approximately $1.4\%$, and TACP-PAS achieves an even smaller value around $0.6\%$.
Moreover, TACP is consistently comparable to or outperforms PW for each partition and score, as it adaptively adjusts its penalty strength based on the chosen head-tail partition, thereby minimizing the head-tail conditional coverage gap for each $\eta$.

\begin{figure*}
    \centering
    \begin{subfigure}[t]{0.5\textwidth}
        \centering
        \includegraphics[width=\linewidth]{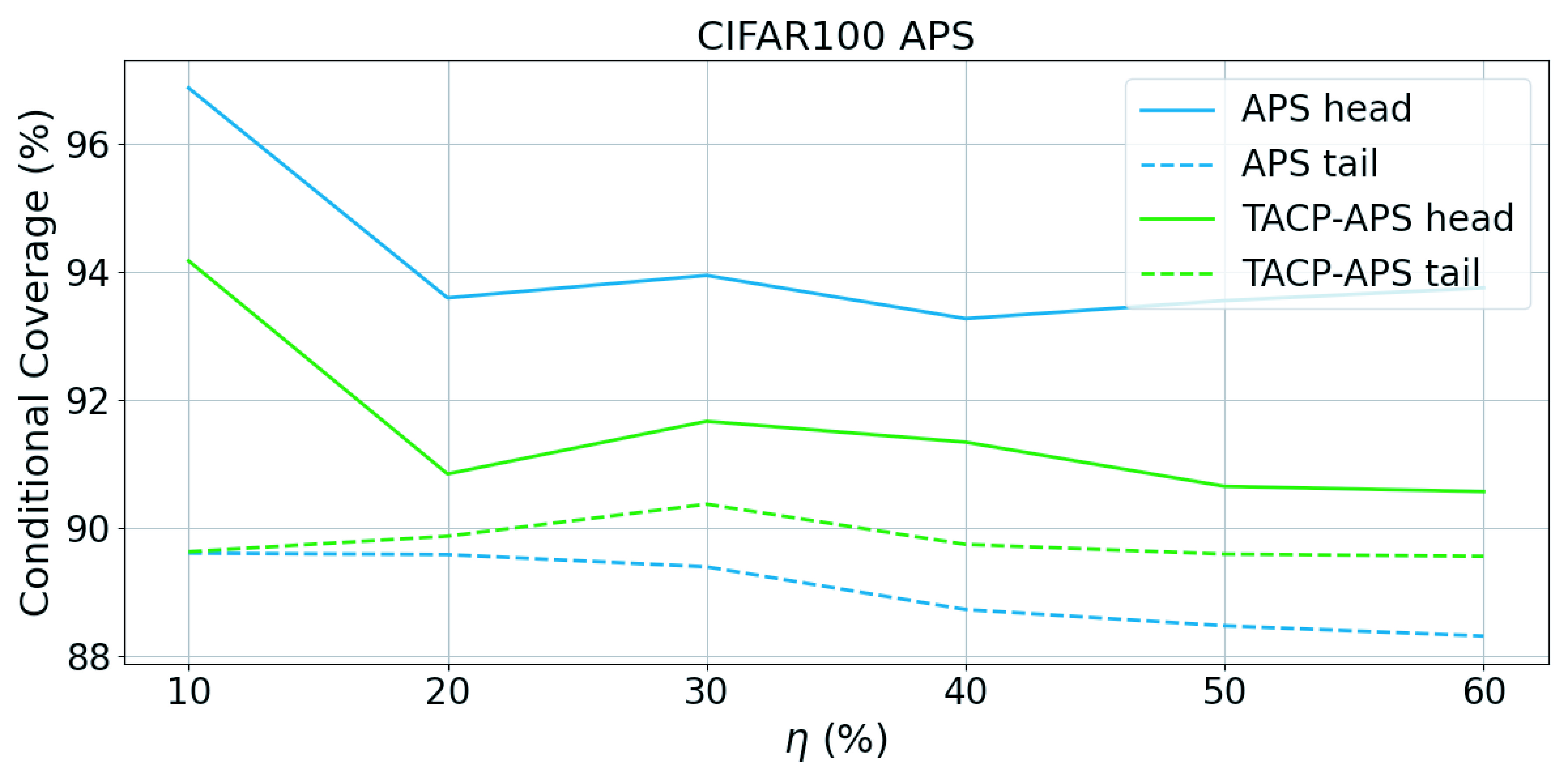}
    \end{subfigure}%
    \hfill
    \begin{subfigure}[t]{0.5\textwidth}
        \centering
        \includegraphics[width=\linewidth]{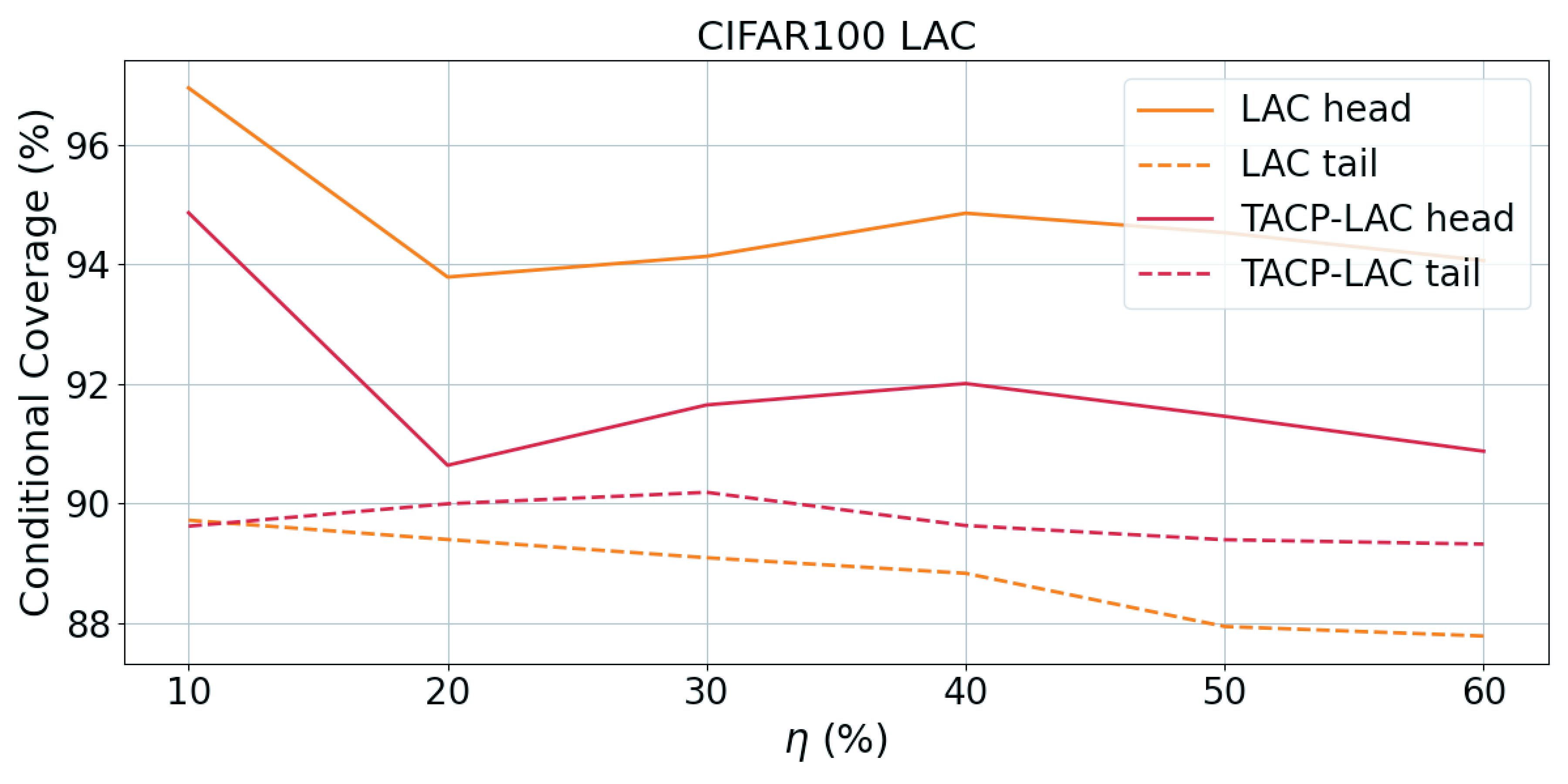}
    \end{subfigure}
        \begin{subfigure}[t]{0.5\textwidth}
        \centering
        \includegraphics[width=\linewidth]{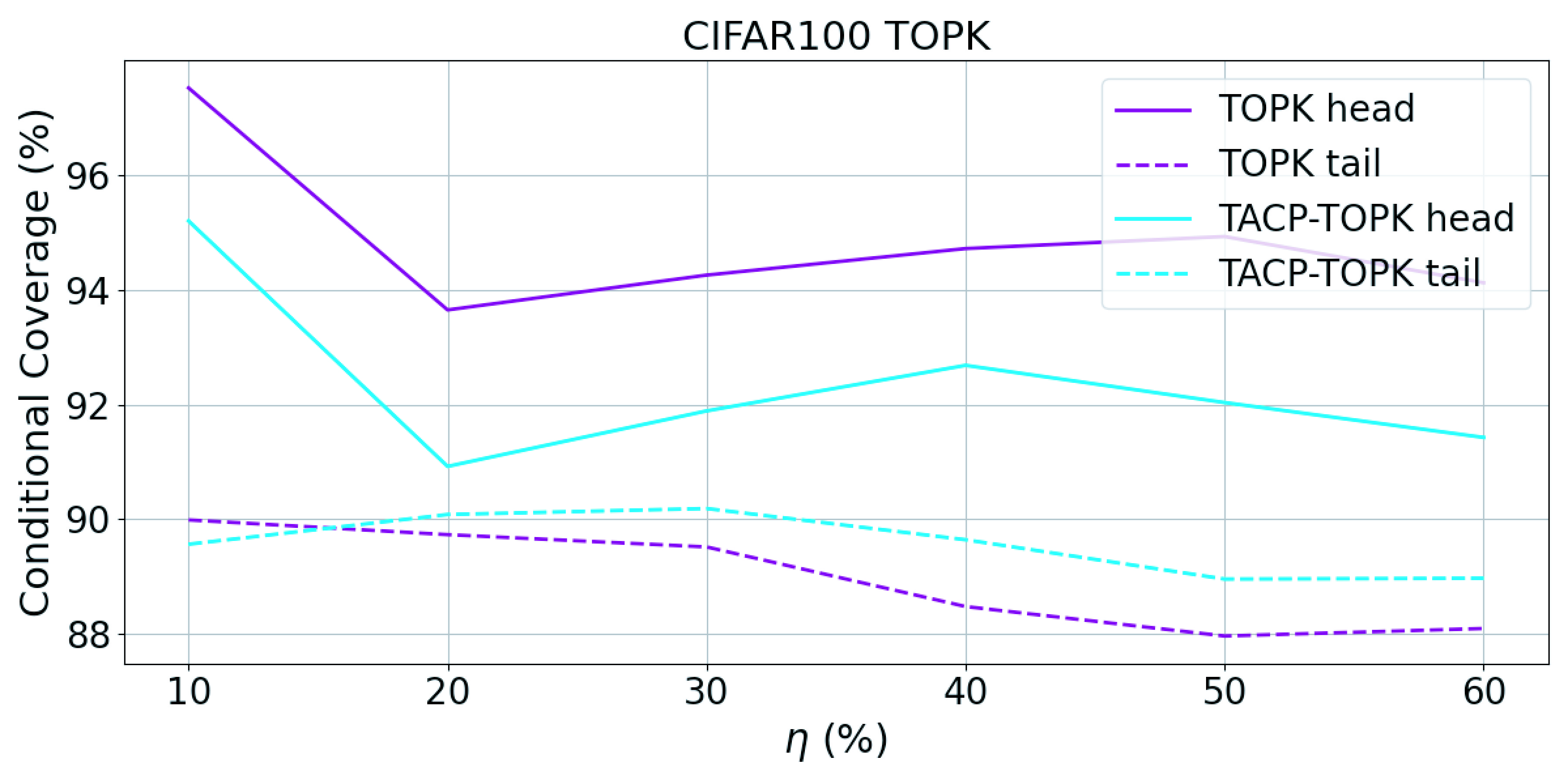}
    \end{subfigure}%
    \hfill
    \begin{subfigure}[t]{0.5\textwidth}
        \centering
        \includegraphics[width=\linewidth]{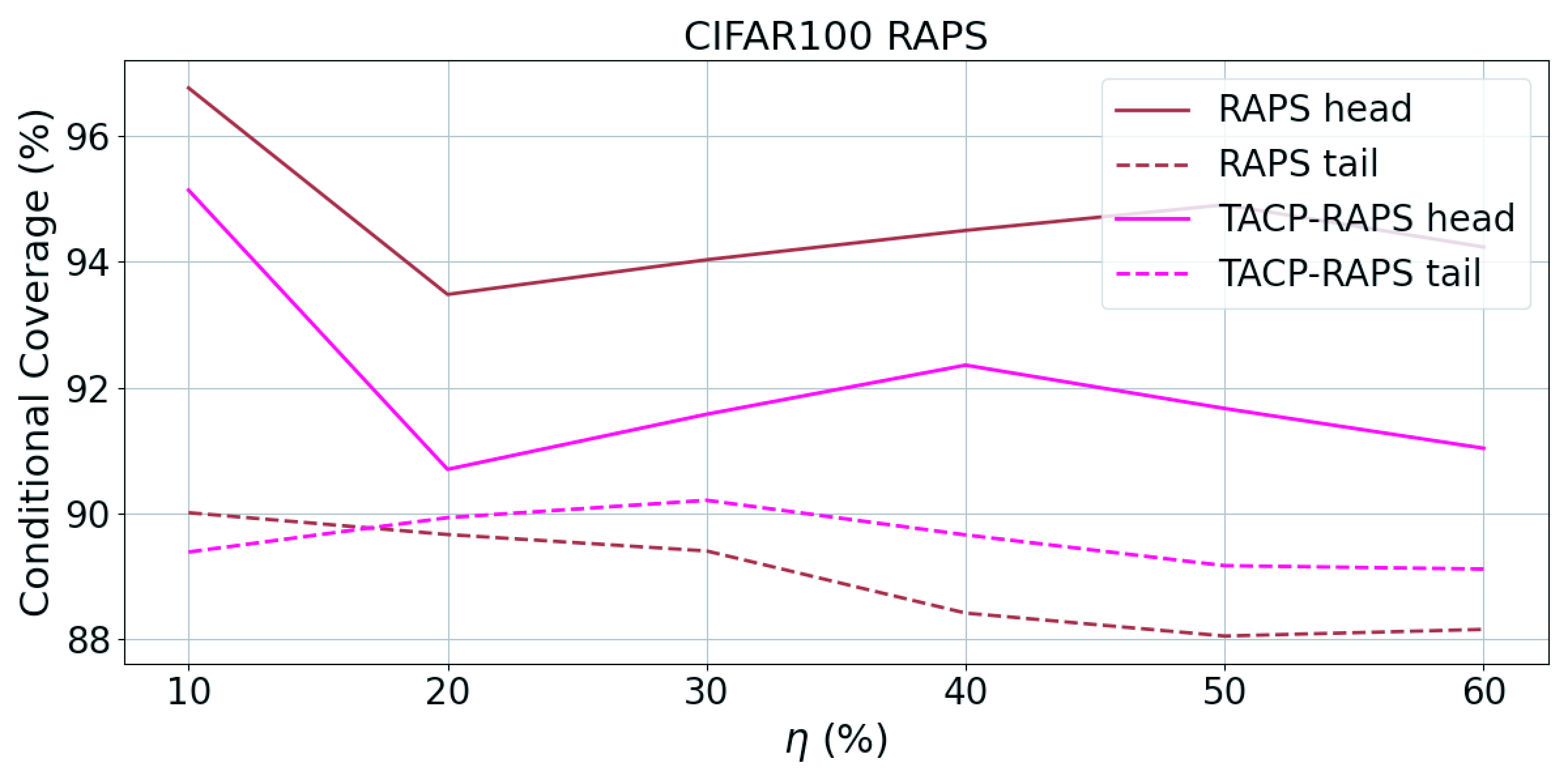}
    \end{subfigure}
    \begin{subfigure}[t]{0.5\textwidth}
        \centering
        \includegraphics[width=\linewidth]{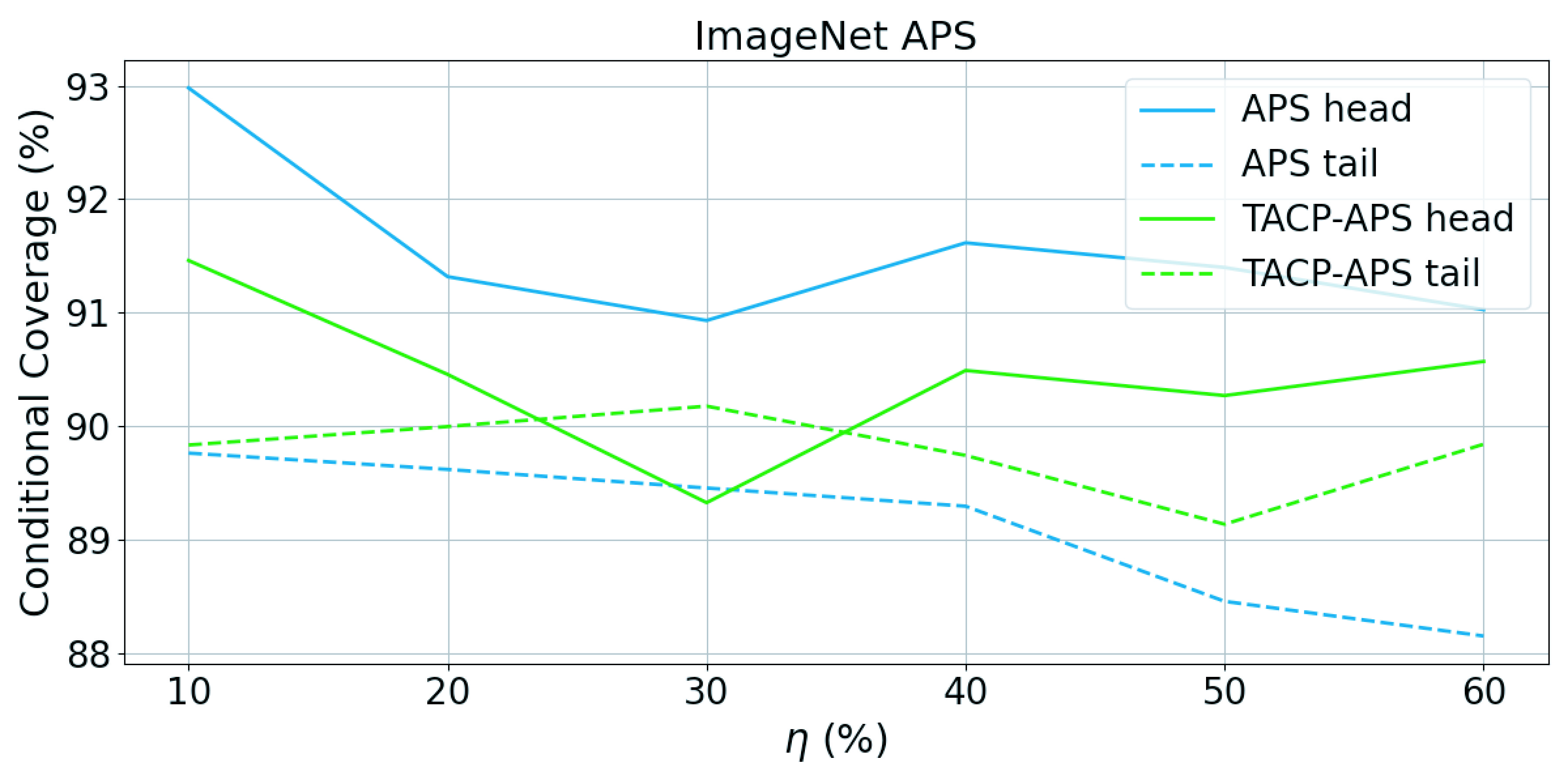}
    \end{subfigure}%
    \hfill
    \begin{subfigure}[t]{0.5\textwidth}
        \centering
        \includegraphics[width=\linewidth]{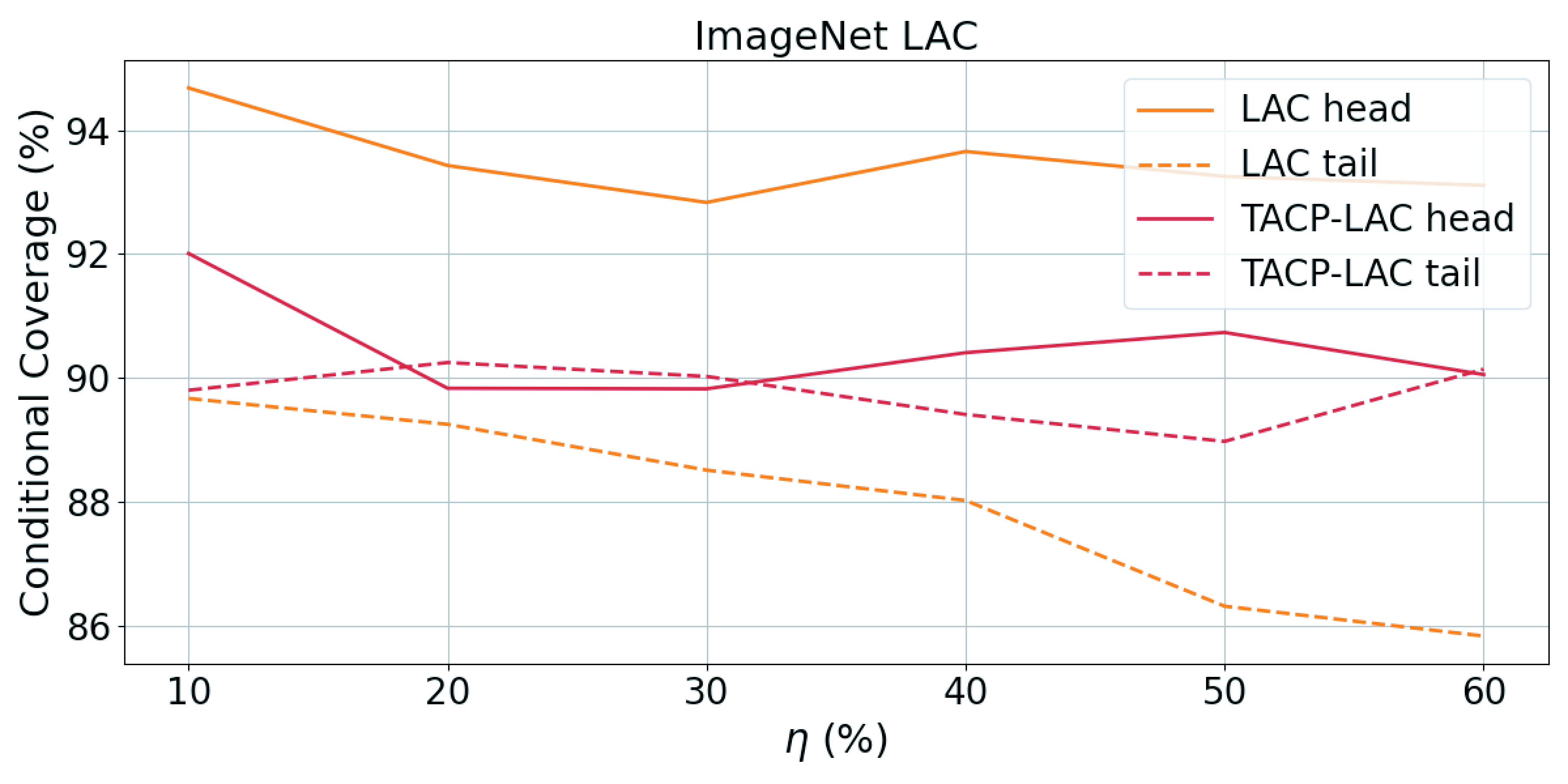}
    \end{subfigure}
            \begin{subfigure}[t]{0.5\textwidth}
        \centering
        \includegraphics[width=\linewidth]{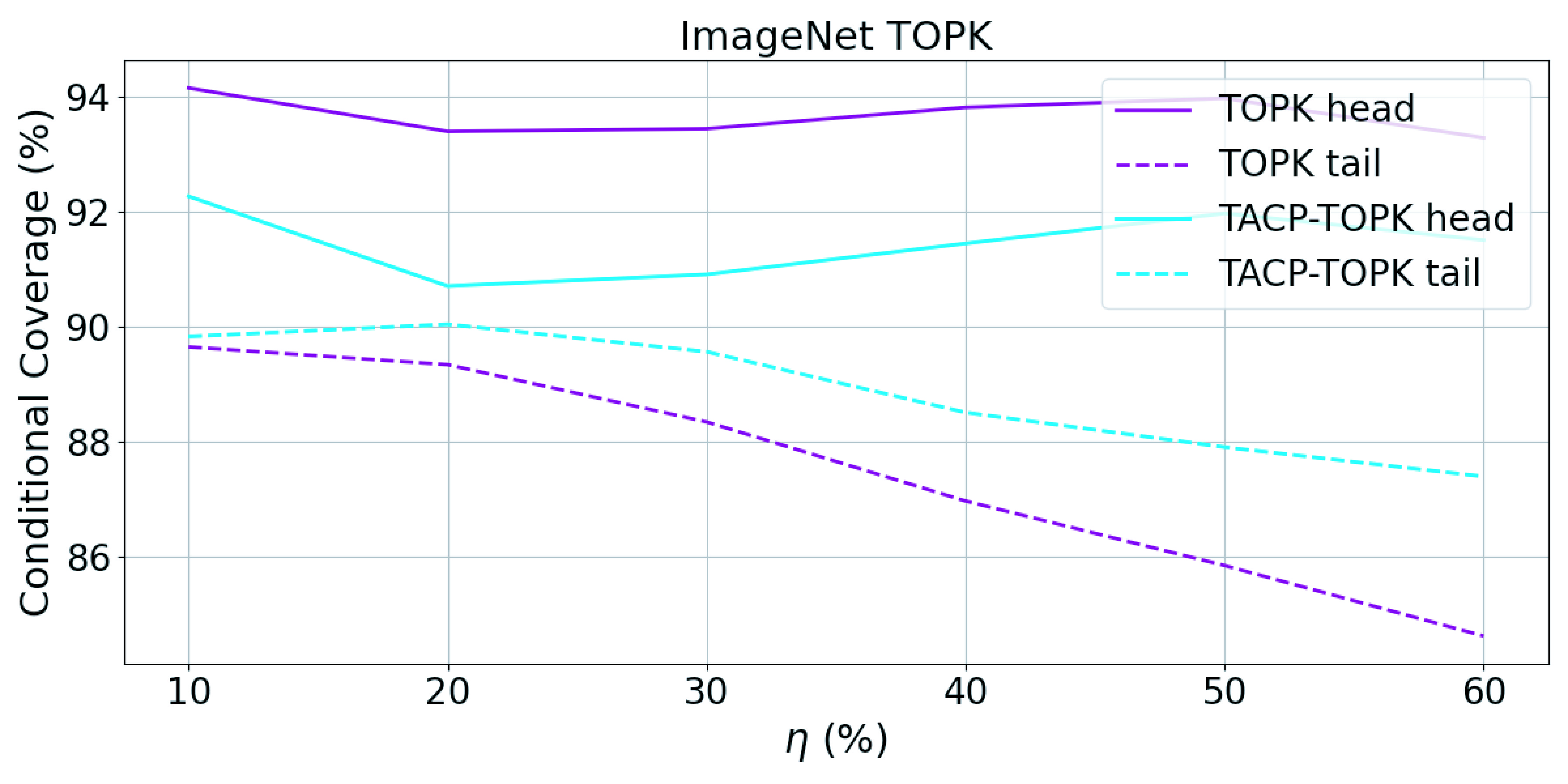}
    \end{subfigure}%
    \hfill
    \begin{subfigure}[t]{0.5\textwidth}
        \centering
        \includegraphics[width=\linewidth]{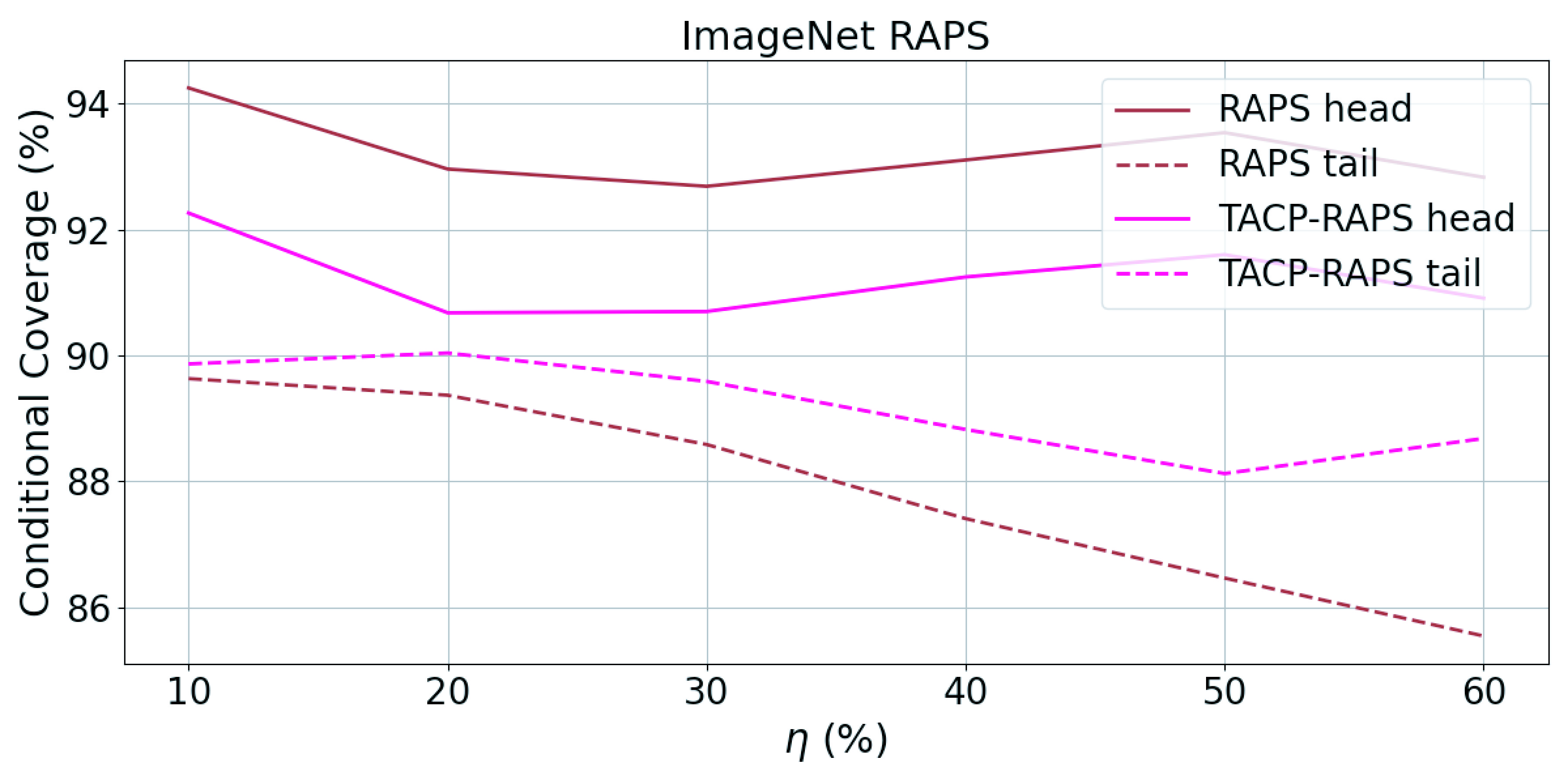}
    \end{subfigure}
    \caption{ Head- and tail-conditional coverage under varying head-tail partition for STANDARD and TACP (fixed $k_r$ and $\lambda$) methods using four non-conformity scores on CIAFR100 LT ($\mu=100$) and ImageNet LT ($\rho=0.3$) at $\alpha=10\%$.}
    \label{fixed_kr_covgap}
\end{figure*}

\begin{figure*}
    \centering
    \begin{subfigure}[t]{0.32\textwidth}
        \centering
        \includegraphics[width=\linewidth]{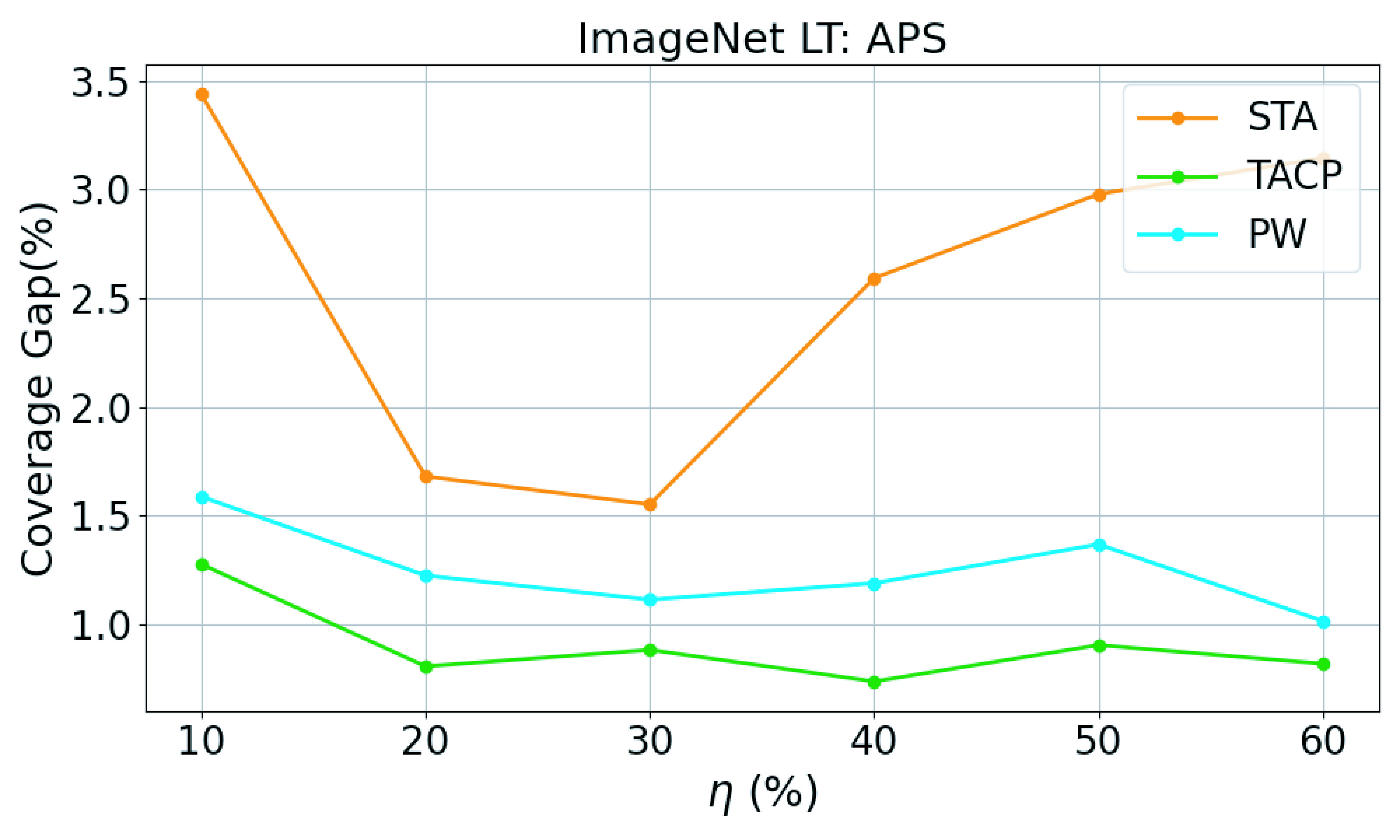}
    \end{subfigure}
    \begin{subfigure}[t]{0.32\textwidth}
        \centering
        \includegraphics[width=\linewidth]{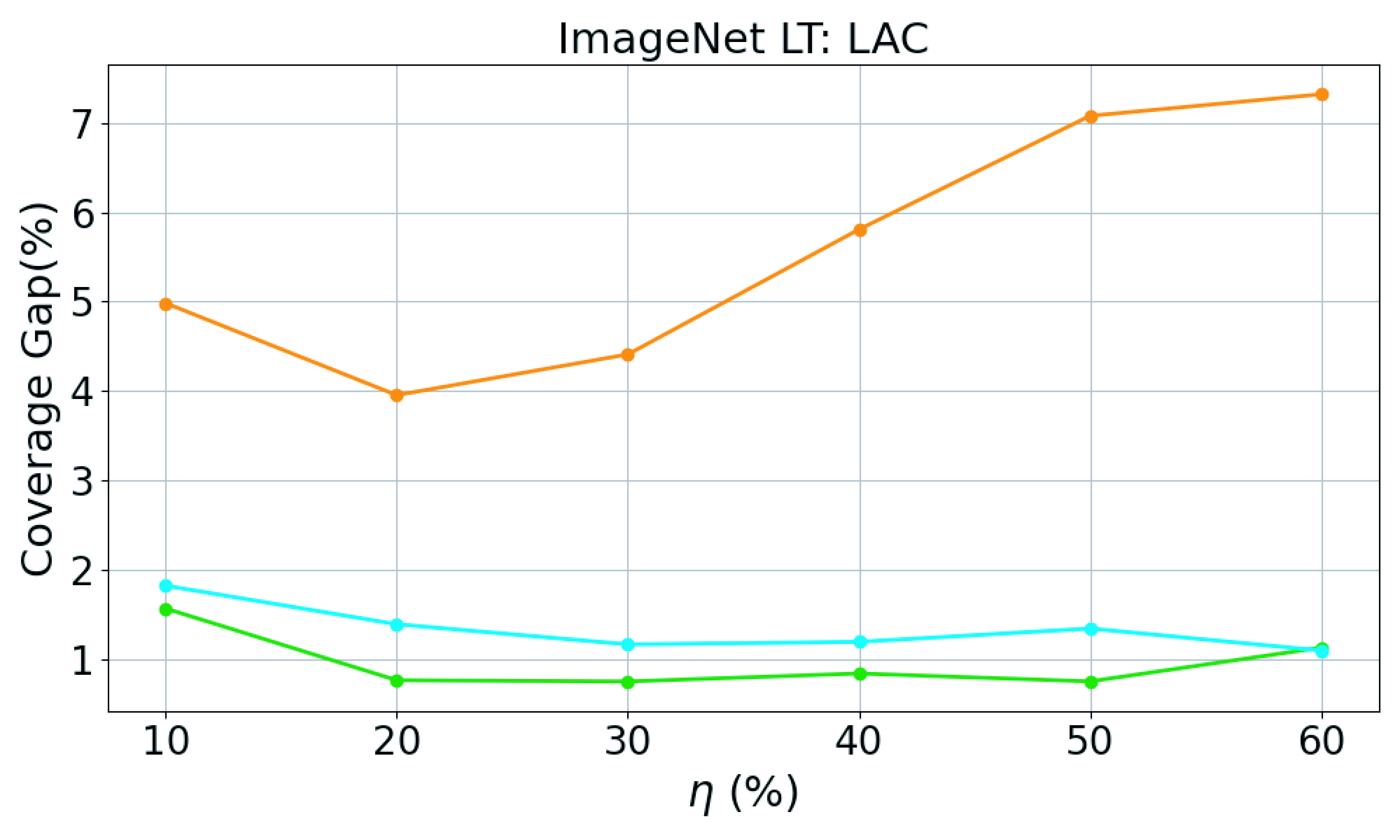}
    \end{subfigure}
        \begin{subfigure}[t]{0.32\textwidth}
        \centering
        \includegraphics[width=\linewidth]{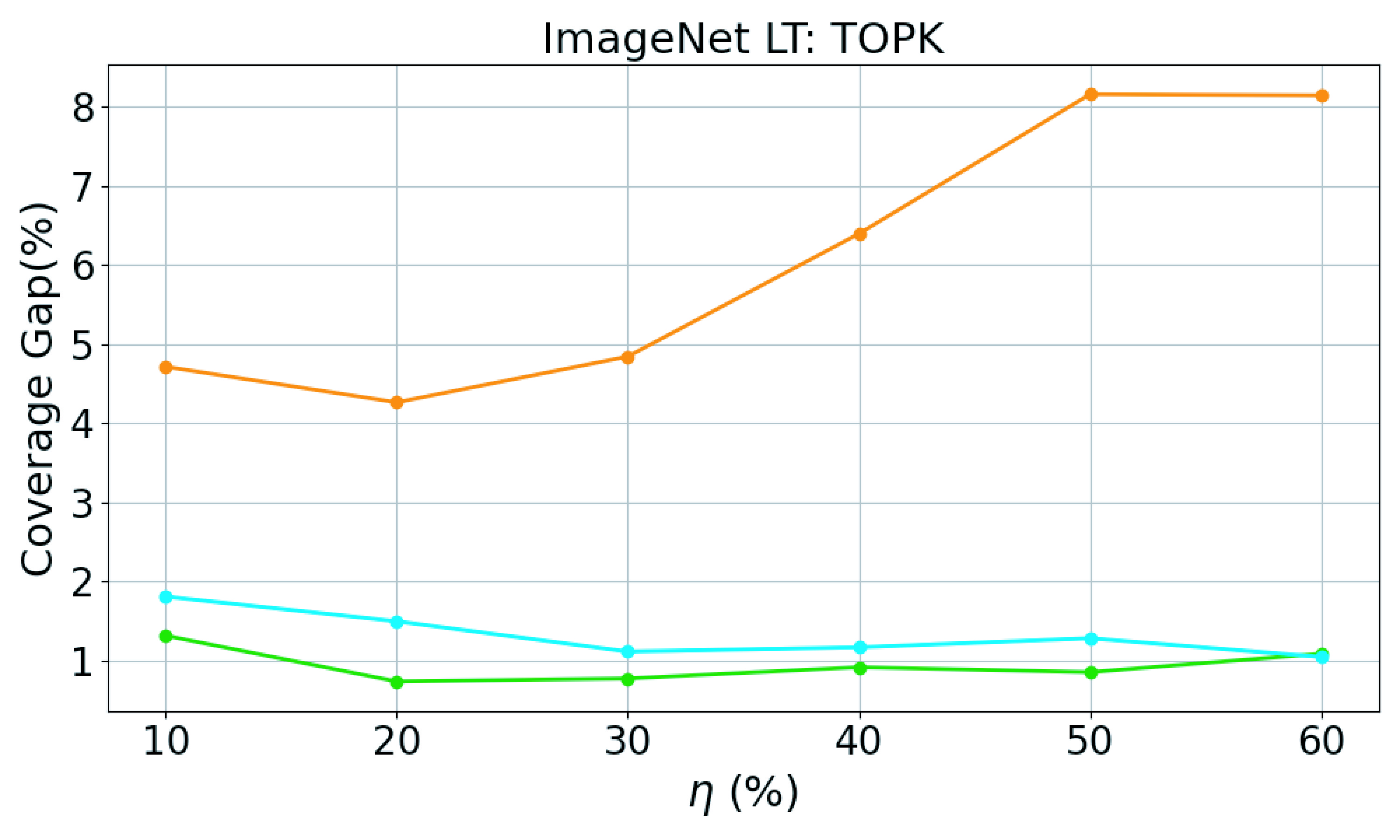}
    \end{subfigure}
    \caption{CovGap-HT for STANDARD, Partition-Wise, and TACP (data driven $k_r$ and $\lambda$) methods on ImageNet LT ($\rho=0.3$) under varying head–tail partition $\eta\%$ at $\alpha=10\%$.}
    \label{varying_kr_covgap}
\end{figure*}

\section{Proofs}
\label{Appendix_proof}
For simplicity, we prove the result for $\lambda=1$, 
and the derivations for other values of $\lambda\in \mathbb{R}^+$ are similar.

\begin{proof}[Proof of Theorem~\ref{marginal_cov}]
Let $\{(X_i,Y_i)\}_{i=1}^n$ be the $n$ calibration samples, and $(X_{n+1},Y_{n+1})$ be the test sample. Define the TACP method with $s(\bm{x},y)$ as base score as follows,
\begin{align*}
    s_{\text{TACP}}(\bm{x},y)=s(\bm{x},y)+\mathbb{I}(y\in \mathcal{G}_h)(o_{\bm{x}}(y)-k_r)^{+},
\end{align*}
where $\mathbb{I}$ is the indicator function; $o_{\bm{x}}(y)$ denotes the predicted rank of label $y$ for feature $\bm{x}$; and $k_r$ is a fixed positive iterger number.

For each calibration sample $(X_i,Y_i)$, we compute its non-conformity score $s_i := s_{\text{TACP}}(X_i, Y_i)$. Then the conformal prediction set for $X_{n+1}$ is defined as:
\begin{align*}
    \mathcal{C}_{\text{TACP}}(X_{n+1})=\{y: s_{\text{TACP}}(X_{n+1},y)\leq \hat{q}_\alpha\}
\end{align*}
where 
\begin{align*}
    \hat{q}_\alpha := \lceil(1-\alpha)(n+1)\rceil\text{-th smallest value in }\{s_1,s_2,\dots,\infty\}.
\end{align*}
We first show the lower bound:
\begin{align*}
    \mathbb{P}(Y_{n+1}\in \mathcal{C}_{\text{TACP}}(X_{n+1}))\geq 1-\alpha.
\end{align*}
Following the i.i.d. of $\{(X_i,Y_i)\}_{i=1}^n$ and $(X_{n+1},Y_{n+1})$, the rank of $s_{n+1}$ among $(s_1,\dots,s_n,s_{n+1}$ is uniformly distributed on $(1,\dots,n+1)$, thus
\begin{align*}
    \mathbb{P}(s_{n+1}\leq \hat{q}_\alpha)\geq 1-\alpha
\end{align*}

For the upper bound, we assume that ties among scores are broken using the uniform random variables $u$, so that $s_1,\dots,s_n,s_{n+1}$ are almost surely distinct. 
\begin{align*}
    s_{\text{TACP}}(\bm{x},y)=s(\bm{x},y)+\mathbb{I}(y\in \mathcal{G}_h)(o_{\bm{x}}(y)-k_r+u)^{+},
\end{align*}
where $u\sim U(0,1)$ is an independent random variable for tie-breaking. Then the probability that $s_{n+1}$ ranks among the smallest $\lceil(1-\alpha)(n+1)\rceil$ is exactly
\begin{align*}
    \mathbb{P}(s_{n+1}\leq \hat{q}_\alpha) =\frac{\lceil(1-\alpha)(n+1)\rceil}{n+1}\leq (1-\alpha)+\frac{1}{n+1}.
\end{align*}
This completes the proof of both bounds.
\end{proof}

\begin{lemma}[Acceptance Threshold Comparison]
Under the same calibration dataset and non-conformity score, the $1-\alpha$ quantile $\hat{q}(k_r)$ computed by TACP is greater than or equal to $\hat{q}_0$ from the STANDARD method for any $k_r$.
\label{qcompare}
\end{lemma}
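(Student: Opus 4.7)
The plan is to observe that the TACP score dominates the base score pointwise on the calibration set, and then invoke monotonicity of the empirical quantile operator in its inputs. Concretely, by the definition of TACP in Eq.~\eqref{tacp},
\begin{align*}
s_{\text{TACP}}(\bm{x},y) - s(\bm{x},y) = \lambda\cdot\mathbb{I}(y\in\mathcal{G}_h)\cdot(o_{\bm{x}}(y)-k_r)^{+}\geq 0,
\end{align*}
since $\lambda>0$, the indicator is non-negative, and the ReLU term is non-negative. Therefore, for every calibration point $(X_i,Y_i)$, we have $s_{\text{TACP}}(X_i,Y_i)\geq s(X_i,Y_i)$.

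Next, I would recall that the sample quantile at any fixed level is monotone non-decreasing with respect to the input values: if two finite sequences $a_1,\dots,a_n$ and $b_1,\dots,b_n$ satisfy $a_i\leq b_i$ for all $i$, then $\text{Quantile}(\beta,\{a_i\})\leq \text{Quantile}(\beta,\{b_i\})$ for any level $\beta\in(0,1)$. Applying this with $a_i=s(X_i,Y_i)$, $b_i=s_{\text{TACP}}(X_i,Y_i)$, and $\beta=\lceil(1-\alpha)(n+1)\rceil/n$, we immediately conclude $\hat{q}(k_r)\geq \hat{q}_0$.

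There is essentially no obstacle here: the only substantive content is the pointwise dominance, which is evident from the ReLU and indicator being non-negative, combined with the elementary monotonicity of order statistics. The result holds for every $k_r$ (and in fact for every $\lambda\geq 0$), since the regularization only ever adds a non-negative quantity to each calibration score, and in particular when $o_{X_i}(Y_i)\leq k_r$ or $Y_i\in\mathcal{G}_t$, the two scores coincide, while otherwise the TACP score is strictly larger.
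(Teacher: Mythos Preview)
Your proposal is correct and follows essentially the same argument as the paper: establish pointwise dominance $s_{\text{TACP}}(X_i,Y_i)\geq s(X_i,Y_i)$ from the non-negativity of the regularization term, then invoke monotonicity of the empirical quantile in its inputs. The paper's proof is terser but structurally identical.
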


\begin{proof}[Proof of Lemma~\ref{qcompare}]
    Let $E_{xy}$ be the event that the calibration dataset is fixed as $\{(X_i,Y_i)\}_{i=1}^n=\{(\bm{x}_1,y_1),\dots, (\bm{x}_n,y_n)\}$. Under this event, define $\hat{q}$ and $\hat{q}_0$ as the $\frac{\lceil(1+n)(1-\alpha)\rceil}{n}$ quantiles computed by TACP and STANDARD method, respectively, using the same base score, denoted by $s_0(\bm{x}, y)$. Then we have,
\begin{align*}
    &\text{For any } i\in [n], k_r\in \mathbb{N}, ~ s_0(\bm{x}_i,y_i) \leq  s_0(\bm{x}_i,y_i) +\mathbb{I}(y_i \in \mathcal{G}_h)\cdot(o_{\bm{x}_i}(y_i)-k_r)^{+} \\
\Longrightarrow &\text{Quantile}(1-\alpha,\{s_0(\bm{x}_i,y_i)\}_{i=1}^n) \leq \text{Quantile}(1-\alpha,\{s_0(\bm{x}_i,y_i)+\mathbb{I}(y_i \in \mathcal{G}_h)\cdot(o_{\bm{x}_i}(y_i)-k_r)^{+}\}_{i=1}^n)\\
\Longrightarrow   &\hat{q}_{0} \leq  \hat{q}(k_r)
\end{align*}
\end{proof}
Lemma~\ref{qcompare} holds because TACP applies rank-based penalties selectively to head labels $\mathcal{G}_h$, aiming to reduce the head–tail coverage gap.

\begin{lemma}[Existence of $k_r$]
For any test sample pair $(X_{n+1},Y_{n+1})\sim P$, there exists $k_r \in \mathbb{N}$, such that the following inequality holds:
\begin{align*}
P(s(X_{n+1},Y_{n+1}) \leq \hat{q}_0 \mid  Y_{n+1} \in \mathcal{G}_h,E_{xy})
 \geq P( s_{\text{TACP}}^{k_r}(X_{n+1},Y_{n+1}) \leq \hat{q}(k_r)\mid  Y_{n+1} \in \mathcal{G}_h,E_{xy})
\end{align*}
where 
\begin{align*}
s_{\text{TACP}}^{k_r}(X_{n+1},Y_{n+1}) := s(X_{n+1},Y_{n+1})+(o_{X_{n+1}}(Y_{n+1})-k_r)^{+}.    
\end{align*}
\label{existkr}
\end{lemma}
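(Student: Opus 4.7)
The plan is to prove the lemma by exhibiting a single, calibration-set-dependent value of $k_r$ that witnesses the existence claim. Conditional on the event $E_{xy}$, I would take $k_r$ large enough that no head-class calibration point activates the ReLU penalty, concretely
\begin{align*}
k_r \;:=\; \max\bigl\{o_{\bm{x}_i}(y_i)\;:\; i\in[n],\; y_i\in\mathcal{G}_h\bigr\},
\end{align*}
with the convention $k_r := 1$ if no calibration point lies in $\mathcal{G}_h$. Under $E_{xy}$ this is a deterministic finite integer (bounded by the number of classes $K$), so it is a legitimate element of $\mathbb{N}$.

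The first key step is to verify that with this choice the two empirical thresholds coincide, i.e., $\hat{q}(k_r)=\hat{q}_0$. For every calibration index $i$, the penalty $\mathbb{I}(y_i\in\mathcal{G}_h)\,(o_{\bm{x}_i}(y_i)-k_r)^{+}$ vanishes: if $y_i\in\mathcal{G}_t$ the indicator is zero, and if $y_i\in\mathcal{G}_h$ then $o_{\bm{x}_i}(y_i)\le k_r$ by construction, so the ReLU part is zero. Hence the TACP and STANDARD score multisets agree on the calibration data and so do their $\lceil(1-\alpha)(n+1)\rceil/n$ quantiles. This reduces the lemma to a pointwise comparison of the two scores at the test point.

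The second key step is an event-inclusion argument. For any realization $(X_{n+1},Y_{n+1})$ with $Y_{n+1}\in\mathcal{G}_h$, the TACP score differs from the base score by a non-negative ReLU term, so $s_{\text{TACP}}^{k_r}(X_{n+1},Y_{n+1})\ge s(X_{n+1},Y_{n+1})$. Combined with $\hat{q}(k_r)=\hat{q}_0$, this yields the inclusion
\begin{align*}
\{\,s_{\text{TACP}}^{k_r}(X_{n+1},Y_{n+1})\le \hat{q}(k_r)\,\} \;\subseteq\; \{\,s(X_{n+1},Y_{n+1})\le \hat{q}_0\,\},
\end{align*}
and taking probabilities conditional on $\{Y_{n+1}\in\mathcal{G}_h\}\cap E_{xy}$ immediately gives the desired inequality.

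The argument is short once the right $k_r$ is identified, and there is no genuine obstacle to the existence statement as formulated. The conceptually harder question, which this proof deliberately sidesteps, is whether a smaller, practically useful $k_r$ also satisfies the bound: shrinking $k_r$ sharpens the penalty on head-class test points, but, by Lemma~\ref{qcompare}, it simultaneously inflates $\hat{q}(k_r)$, and a quantitative balance between these two competing effects would require tracking the empirical distribution of head-class ranks in both the calibration and test populations. For the existence conclusion needed to drive Theorem~\ref{theorem2}, however, the constructive choice above is enough.
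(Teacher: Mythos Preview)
Your proof is correct and follows essentially the same approach as the paper: pick $k_r$ large enough that the rank penalty vanishes on every calibration point, forcing $\hat{q}(k_r)=\hat{q}_0$, and then compare the two scores at the test point. The paper makes the even simpler choice $k_r\ge K$, which also zeros out the penalty at the test point and gives equality outright, whereas your calibration-dependent $k_r$ may leave a positive penalty on the test score and therefore needs the event-inclusion step; the underlying mechanism is identical.
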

\begin{proof}[Proof of Lemma~\ref{existkr}]
Let $f(k_r)$ denote the head-conditional coverage of TACP under fixed calibration dataset $E_{xy}$,
\begin{align*}
    f(k_r) := P( s_{\text{TACP}}^{k_r}(X_{n+1},Y_{n+1}) \leq \hat{q}(k_r) \mid  Y_{n+1} \in \mathcal{G}_h,E_{xy}).
\end{align*}
The rank term $o_{X}(Y)$ takes values in $\{1,2,\dots, K\}$, where $K=|\mathcal{Y}|$. Thus, for any $k_r \geq  K$, we have 
\begin{align*}
    &(o_{X_{n+1}}(Y_{n+1})-k_r)^{+}=0 \text{ almost surely}\\
    \Longrightarrow & s_{\text{TACP}}^{k_r}(X_{n+1},Y_{n+1}) = s(X_{n+1},Y_{n+1})   \text{ almost surely } \\
    \Longrightarrow  &\hat{q}(k_r) =\hat{q}_0 \\
    \Longrightarrow  &f(k_r)= P(s(X_{n+1},Y_{n+1}) \leq \hat{q}_0 \mid  Y_{n+1} \in \mathcal{G}_h,E_{xy})
\end{align*}
Thus, there exists $k_r$ such that 
\begin{align*}
    f(k_r) \leq  P(s(X_{n+1},Y_{n+1}) \leq \hat{q}_0 \mid  Y_{n+1} \in \mathcal{G}_h,E_{xy}).
\end{align*}
\end{proof}

Lemma~\ref{existkr} ensures that the TACP coverage on the head group can always be controlled by properly tuning the regularization parameter $k_r$. In practice, we search over $k_r$ to minimize the head–tail coverage gap while ensuring valid coverage.
Combining Lemma~\ref{qcompare} and Lemma~\ref{existkr}, we can obtain the following improved coverage gap result.

\begin{proof}[Proof of Theorem~\ref{theorem2}]
Let $P$ denote the joint distribution over $(X_{n+1},Y_{n+1})$, then we have
\begin{align*}
   &P(Y_{n+1}\in \mathcal{C}_{\text{TACP}}(X_{n+1}) \mid  Y_{n+1}\in \mathcal{G}_h,E_{xy})
  -   P (Y_{n+1}\in \mathcal{C}_{\text{TACP}}(X_{n+1})\mid  Y_{n+1}\in \mathcal{G}_t,E_{xy})\\
=&P(s(X_{n+1},Y_{n+1})
+(o_{X_{n+1}}(Y_{n+1})-k_r)^{+} \leq \hat{q}(k_r) \mid  Y_{n+1} \in \mathcal{G}_h,E_{xy}) 
- P(s(X_{n+1},Y_{n+1})\leq \hat{q}(k_r) \mid Y_{n+1} \in \mathcal{G}_t,E_{xy})\\
\text{\textcircled{1}}~ \leq& P(s(X_{n+1},Y_{n+1})
+(o_{X_{n+1}}(Y_{n+1})-k_r)^{+} \leq \hat{q}(k_r)  \mid  Y_{n+1} \in \mathcal{G}_h,E_{xy}) 
- P(s(X_{n+1},Y_{n+1})\leq \hat{q}_0 \mid Y_{n+1} \in \mathcal{G}_t,E_{xy})\\
\text{\textcircled{2}}~ \leq& P(s(X_{n+1},Y_{n+1}) \leq \hat{q}_0 \mid  Y_{n+1} \in \mathcal{G}_h,E_{xy}) - P(s(X_{n+1},Y_{n+1})\leq \hat{q}_0 \mid Y_{n+1} \in \mathcal{G}_t,E_{xy})\\
= &P(Y_{n+1}\in \mathcal{C}_{\text{STD}}(X_{n+1}) \mid  Y_{n+1}\in \mathcal{G}_h,E_{xy})
- P(Y_{n+1}\in \mathcal{C}_{\text{STD}}(X_{n+1})\mid  Y_{n+1}\in \mathcal{G}_t,E_{xy})
\end{align*}
The inequality \textcircled{1} follows from Lemma ~\ref{qcompare}, while the inequality \textcircled{2} holds by Lemma ~\ref{existkr}.
\end{proof}

\section{Details of Class-conditional Experiments}
\label{Appendix_classwise-exp}

\subsection{Baselines}
\paragraph{CLASSWISE Method}
The CLASSWISE (CW) method computes a separate acceptance threshold for each class based on class-specific calibration samples. And the prediction set is defined as
\begin{align*}
    \mathcal{C}_{\text{CW}}(X_{n+1})=\{y: s(X_{n+1},y)\leq \hat{\tau}_{\alpha}^y\}, \text{ for all } y\in \mathcal{Y},
\end{align*}
where the threshold $\hat{\tau}_{\alpha}^y$ for label $y$ is given by
\begin{align*}
    \hat{\tau}_{\alpha}^y = \text{Quantile}\left(\frac{\lceil(1+n_y)(1-\alpha)\rceil}{n_y},\{s(X_i,Y_i)\}_{i\in \mathcal{I}_y} \right),
\end{align*}
and $n_y$ represents the number of calibration samples belonging to label $y$ and $\mathcal{I}_y=\{i\in [n]: Y_i=y\}$ denotes the indices of these samples. Note that we set $\hat{\tau}_{\alpha}^y=\infty$ when $n_y<(1/\alpha)-1$, making the prediction set for features with rare classes the entire label space $\mathcal{Y}$.
Although each ground-truth class $y$ has at least $1-\alpha$ probability of being included in the prediction set \cite{DBLP:journals/ml/Vovk13}, 
it often suffers from overly large prediction sets for classes with limited calibration samples.

\paragraph{CLUSTER Method}
The CLUTSER (CLUS) method groups classes into $M$ clusters based on their non-conformity scores on a cluster dataset and computes the acceptance threshold for each cluster.
\begin{align*}
    \mathcal{C}_{\text{CLUS}}(X_{n+1})=\{y: s(X_{n+1},y)\leq \hat{\tau}_{\alpha}^{h(y)}\},
\end{align*}
for all $h(y) \in \{1,\dots,M\}\cup\{\text{null}\}$, where 
\begin{align*}
    &\hat{\tau}_{\alpha}^m = \text{Quantile}\left(\frac{\lceil(1+n_m)(1-\alpha)\rceil}{n_m},\{s(X_i,Y_i)\}_{i\in \mathcal{I}_m} \right),\\
        &\hat{\tau}_{\alpha}^{\text{null}} = \text{Quantile}\left(\frac{\lceil(1+n)(1-\alpha)\rceil}{n},\{s(X_i,Y_i)\}_{i\in \mathcal{I}} \right).
\end{align*}
where $\mathcal{I}_m=\{i\in [n']: h(Y_i)=m\}$ and $n_m = |\mathcal{I}_m|$ represents the indices and count of samples in cluster m, while $\mathcal{I}=\cup_m \mathcal{I}_m$ and $n=\sum_m n_m$ aggregate all clusters. Classes with too few data to join any of the clusters form the ‘null’ set, and we aggregate all calibrated samples to calculate the acceptance threshold for this ‘null’ group. 
Following \citet{DBLP:conf/nips/DingABJT23}, we set $\gamma=0.8$ and $M=4$, where $\gamma\in [0,1]$ denotes the fraction of calibration data used for clustering and $M$ specifies the number of clusters.

\paragraph{RC3P Method}
By introducing label rank calibration and adapting class-wise quantile thresholds, the Rank Calibrated Class-conditional Conformal Prediction (RC3P) method improves the prediction efficiency of CLASSWISE by leveraging the class-wise top-$k$ prediction error, denoted by 
\begin{align}
    \text{Err}_y^{k}=\mathbb{E}_{X,Y}[\mathbb{I}(Y\notin \text{Top-}k(X))\mid Y=y ],
\end{align}
to adjust the quantile level for each class accordingly. The prediction set is defined as
\begin{align*}
    \hat{\mathcal{C}}_{\text{RC3P}}(X_{n+1})=
    \{y: s(X_{n+1},y)\leq \hat{\tau}(y), o_{X_{n+1}}(y)\leq \hat{k}(y) \}
\end{align*}
where 
$$\hat{\tau}(y)=\text{Quantile}(1-(\alpha-\text{Err}_y^{\hat{k}(y)}),\{(s(X_i,Y_i))\}_{i\in \mathcal{I}_y}).$$
In \citet{ShiGBD024}, they proposes two variants: RC3P-I (model-agnostic coverage), where $\hat{k}(y)\in \{k: \text{Err}_y^{k}<\alpha\}$ and RC3P-II (model-agnostic coverage with improved predictive efficiency), where $\hat{k}(y)=\min\{k: \text{Err}_y^{k}<\alpha\}$. They evaluated RC3P under a partial LT setting, where only the training dataset is imbalanced, while the calibration and test datasets remain balanced. 
In our fully LT setting, we find that RC3P-II often leads to invalid marginal coverage, likely due to unreliable estimations of the class-wise top-$k$ error when all the training, calibration, and test datasets are long-tail. Therefore, we report the results of RC3P I in our experimental results

\subsection{Evaluation Metric}
Denote the test dataset by $\{(\bm{x}'_i,y'_i)\}_{i=1}^{N_{\text{test}}}$. We measure the prediction sets by coverage and average set size (AvgSize):
\begin{align*}
&\text{Coverage}=\frac{1}{N_{\text{test}}}\sum _{i=1}^{N_{\text{test}}}\mathbb{I}(y\in \mathcal{C}(\bm{x}_{i}) ), ~\text{AvgSize} =\frac{1}{N_{\text{test}}}\sum _{i=1}^{N_{\text{test}}}|\mathcal{C}(\bm{x}_{i})|,
\end{align*}
where $\mathbb{I}$ is the indicator function, $N'$ denotes the number of test samples, and $|\mathcal{C}(\bm{x}_{i})|$ denotes the number of labels in the prediction set $\mathcal{C}(\bm{x}_{i})$.
Let $\mathcal{I}_y=\{i \in [N_{\text{test}}]: y'_i = y\}$ be the indices of test examples with label y, we define the class-conditional coverage gap (CovGap) as follows,
\begin{align}
     \text{CovGap}= 100\times
   \frac{1}{|\mathcal{Y}|} \sum_{y\in \mathcal{Y}} 
   \bigg | \frac{\sum_{j\in \mathcal{I}_y}\mathbb{I}(y_j\in \mathcal{C}(\bm{x}_j))}{|\mathcal{I}_y|} -(1-\alpha) \bigg |,
\end{align}
where $|\mathcal{I}_y|$ denotes the number of test samples associated with label $y$, and $\alpha$ is the predefined miscoverage level. 
CovGap quantifies the adaptiveness of conformal prediction sets across labels by measuring the derivation of class-conditional coverage from the target level $1-\alpha$.
A smaller CovGap indicates better adaptiveness, and CovGap = 0 when the prediction sets achieve feature-conditional coverage.

\subsection{Parameters Finetuning of sTACP}
We select $k_r$ in a similar way to the TACP method on ImageNet LT. 
For APS and LAC non-conformity scores, $\lambda$ is chosen from $\{0.001,0.01,0.1,0.2,0.3,0.5,0.8,1\}$.
and for RAPS and TOPK from $\{1,8,10,50,100,300,500,800\}$ instead.

\subsection{Results at Target Coverage level 95\%} 
We further conduct class-conditional experiments at $\alpha=5\%$, using LAC as base score to provide a comparison of the baseline performances.

Table~\ref{ImageNetClasswise_inefficency} shows that under the long-tail label distribution with limited data, the CLASSWISE method often generates the entire label set $\mathcal{Y}$ for almost all input samples, resulting in a trivial average coverage of $100\%$ and a class-conditional coverage gap of $\alpha\times100=5$. Such unnecessarily larger label prediction sets, with size close to $|\mathcal{Y}|=1000$, are uninformative and ineffective.
The CLUSTER method degenerates to the STANDARD method since each class in the cluster dataset contains fewer than $1/\alpha-1=19$ samples, causing all classes to be assigned to the ‘null’ cluster, so we omit its results. Moreover, the RC3P method suffers from unreliable class-specific top-$k$ error estimations, which leads to noisy class-wise acceptance threshold estimations.
In contrast, our sTACP method achieves a smaller CovGap while preserving prediction efficiency. 

\begin{table}[H]
  \centering
    \fontsize{9pt}{12pt}\selectfont
  \begin{tabular}{cccc}
   \toprule
    Method & CovGap & Coverage & AvgSize \\
    \midrule
    STA&9.64$\pm$0.00&95.12$\pm$0.01&33.44$\pm$2.51\\
    CW&5.00$\pm$0.00&99.90$\pm$0.11&983.15$\pm$3.21\\
    CLUSTER&{———}&{———}&{———}\\
    RC3P&15.48$\pm$0.67&89.40$\pm$0.66&32.27$\pm$2.34\\
    sTACP&8.12$\pm$0.03&0.958$\pm$0.08&48.31$\pm$0.72\\
    \bottomrule
    \end{tabular}
    \caption{Performance of LAC as base score on ImageNet LT ($\rho=0.6$) at target coverage $95\%$.}  
    \label{ImageNetClasswise_inefficency}
\end{table}

\end{document}